\documentclass[]{template}

\usepackage{booktabs}
\usepackage{multirow}
\usepackage{colortbl}
\usepackage{multicol}
\usepackage{nicefrac}
\usepackage{latexsym}
\usepackage{float}
\usepackage{subcaption}
\usepackage{wrapfig}
\usepackage{bm}
\usepackage{tabularx}
\usepackage{ragged2e}
\newcolumntype{L}{>{\RaggedRight\hangafter=1\hangindent=0em}X}

\usepackage{mathtools}
\usepackage{algorithm}
\usepackage{algorithmic}
\usepackage[capitalize,noabbrev]{cleveref}
\usepackage{calligra}
\usepackage{footmisc}    
\usepackage{hyperref} 
\DeclareMathAlphabet{\mathcalligra}{T1}{calligra}{m}{n}

\hypersetup{
    colorlinks=true,
    linkcolor=red,
    citecolor=Cerulean,
    filecolor=magenta,
    urlcolor=magenta,
}

\crefname{section}{§}{§§}
\Crefname{section}{§}{§§}

\theoremstyle{plain}
\newtheorem{theorem}{Theorem}[section]
\newtheorem{proposition}[theorem]{Proposition}

\theoremstyle{definition}

\theoremstyle{remark}

\renewcommand{\paragraph}[1]{\vspace{1mm}\noindent\textbf{#1}}
\DeclareCaptionLabelFormat{cont}{#1~#2\alph{ContinuedFloat}}
\usepackage{xspace}

\definecolor{lightgray}{gray}{0.95}

\newcommand\blfootnote[1]{%
  \begingroup
  \renewcommand\thefootnote{}\footnote{#1}%
  \addtocounter{footnote}{-1}%
  \endgroup
}

\setboolean{logo}{true}
\title{Beyond Mode Collapse: Distribution Matching for Diverse Reasoning}

\author[1]{Xiaozhe Li}
\author[6]{Yang Li}
\author[2,3]{Xinyu Fang}
\author[2,4]{Shengyuan Ding}
\author[2,4]{Peiji Li}
\author[2]{Yongkang Chen}
\author[2,4]{Yichuan Ma}
\author[1]{Tianyi Lyu}
\author[2,4,5]{Linyang Li}
\author[2,5]{Dahua Lin}
\author[2,4]{Qipeng Guo$^{\dag}$}
\author[1]{Qingwen Liu$^{\dag}$}
\author[2]{Kai Chen}

\affil[1]{Tongji University, China}
\affil[2]{Shanghai AI Laboratory, China}
\affil[3]{Zhejiang University, China}
\affil[4]{Fudan University, China}
\affil[5]{The Chinese University of Hong Kong, China}
\affil[6]{Independent}

\begin{abstract}
On-policy reinforcement learning methods like GRPO suffer from \emph{mode collapse}: they exhibit reduced solution diversity, concentrating probability mass on a single solution once discovered and ceasing exploration of alternative strategies. We show this stems from reverse KL minimization's mode-seeking behavior, which reinforces the first high-reward trajectory found rather than maintaining a distribution over multiple diverse solutions. We propose DMPO (\textbf{D}istribution-\textbf{M}atching \textbf{P}olicy \textbf{O}ptimization), which prevents mode collapse through principled approximation of forward KL minimization. DMPO constructs a group-level target distribution over sampled trajectories proportional to their rewards, then aligns the policy distribution to this target. This provides mode-covering behavior without requiring sampling from the intractable global target distribution, enabling sustained exploration throughout training. We validate DMPO on NP-hard combinatorial optimization, where exponentially many feasible solutions exist but only a few approach optimality—an ideal testbed for evaluating exploration. DMPO achieves {43.9\% Quality Ratio on text-based NP-Bench (vs. GRPO's 40.1\%)} and {43.1\% on vision-based NP-Bench (vs. 38.4\%)}—demonstrating 9\% and 12\% relative improvements respectively. These gains generalize to mathematical reasoning (+2.0\%) and out-of-domain tasks (+2.3\%), showing that diversity-preserving training enhances general reasoning capabilities across modalities. Our work establishes distribution matching as a practical, principled approach to preventing mode collapse in on-policy RL, with consistent quality improvements demonstrating sustained exploration across diverse reasoning tasks.
\end{abstract}

\begin{document}
\blfootnote{$\dagger$ Corresponding authors: Qipeng Guo (guoqipeng@pjlab.org.cn),Qingwen Liu (qliu@tongji.edu.cn)}
\blfootnote{Code is at: \url{https://github.com/OliverLeeXZ/DMPO}}
\maketitle

\section{Introduction}
Reinforcement Learning with Verifiable Rewards (RLVR) has emerged as a powerful paradigm for training large reasoning models (LRMs), enabling substantial gains in mathematical problem solving~\cite{he2025deepmath,yu2025dapo} and code generation~\cite{liu2025code}. By replacing learned reward models with exact, rule-based verification, RLVR enables stable optimization and scalable improvement. As a result, on-policy methods such as GRPO~\cite{shao2024deepseekmath} have become standard tools for training reasoning-oriented language models.

Despite their success, these methods suffer from a fundamental limitation that becomes increasingly severe as training progresses: \emph{mode collapse}~\cite{karan2025reasoning,yue2025does,song2025outcome}. Once a single high-reward solution is discovered, the policy rapidly concentrates probability mass on that trajectory, suppressing alternative solution strategies and effectively halting exploration. This premature convergence prevents discovery of better solutions, reduces robustness, and leads to brittle reasoning behaviors that fail to generalize.

We show that this phenomenon is not an implementation artifact, but a direct consequence of the optimization objective used in standard on-policy RL. GRPO and related methods implicitly minimize the \emph{reverse} KL divergence between the policy and a reward-weighted distribution. While reverse KL is effective for refining a known good mode, it is inherently \emph{mode-seeking}: it amplifies the first successful trajectory encountered rather than preserving a distribution over diverse high-reward solutions. In contrast, forward KL minimization exhibits \emph{mode-covering} behavior, encouraging the policy to match the full support of the target distribution~\cite{murphy2012machine}. Although prior work has empirically observed declining diversity during training~\cite{yue2025does,song2025outcome,chen2025pass}, the role of reverse KL as the root cause of mode collapse in RLVR has not been systematically addressed.

Motivated by this analysis, we propose \emph{Distribution-Matching Policy Optimization} (DMPO), a principled on-policy algorithm that mitigates mode collapse by approximating forward KL minimization. Directly optimizing forward KL is intractable, as it requires sampling from a global reward-weighted distribution over trajectories. DMPO circumvents this challenge by operating at the \emph{group level}: for each group of sampled trajectories, we construct a target distribution proportional to their rewards (a Boltzmann distribution) and explicitly align the policy distribution to this target. This formulation is naturally compatible with GRPO’s group-based advantage normalization and can be implemented by adding a single distribution-matching regularization term to the standard objective. As a result, DMPO preserves multiple high-reward modes and sustains exploration throughout training.

To rigorously evaluate exploration behavior, we require tasks where multiple high-quality solutions exist, mode collapse is easily observable, and solution quality can be precisely measured. NP-hard combinatorial optimization satisfies all three criteria. Such problems admit exponentially many feasible solutions, but only a small fraction approach optimality, making them an ideal testbed for studying the trade-off between exploration and exploitation. Building on NP-Bench~\cite{npengine}, we introduce MM-NP-Bench, a multimodal benchmark comprising 10 text-based and visual combinatorial optimization tasks spanning \textit{Routing}, \textit{Covering}, \textit{Partitioning}, \textit{Substructure}, and \textit{Constraints}. Each task includes parametric generators with controllable difficulty, exact rule-based verifiers, and heuristic solvers that provide near-optimal baselines, enabling precise evaluation of the solution quality via our proposed Quality Ratio metric.

Across both text-only and multimodal benchmarks, DMPO consistently outperforms GRPO and its variants. On NP-Bench, DMPO achieves a Quality Ratio of 43.9\% compared to GRPO’s 40.1\%, a 9\% relative improvement. On MM-NP-Bench, DMPO achieves 43.1\% versus 38.4\%, a 12\% relative improvement. These gains indicate that distribution matching effectively prevents premature convergence, enabling discovery of higher-quality solutions that mode-seeking objectives fail to uncover. Moreover, the benefits extend beyond combinatorial optimization: DMPO improves mathematical reasoning by 2.0\% and out-of-domain tasks by 2.3\%, suggesting that diversity-preserving training yields more robust and transferable reasoning abilities.

In summary, our contributions are threefold:
\begin{itemize}[leftmargin=*, topsep=0pt, itemsep=0pt]
    \item \textbf{Identifying and solving mode collapse:} We show that on-policy RL methods suffer from mode collapse due to reverse KL's mode-seeking behavior, and propose DMPO—a simple, practical solution that approximates forward KL minimization at the group level, achieving 9-12\% relative improvements on optimization tasks.
    
    \item \textbf{MM-NP-Bench: A multimodal benchmark for testing exploration:} We introduce MM-NP-Bench, extending NP-Bench~\cite{npengine} to vision-language models with visual representations of 10 NP-hard tasks. The benchmark features dual-metric evaluation (Success Rate \& Quality Ratio) that makes mode collapse observable: high SR but low QR reveals a policy that finds solutions but doesn't optimize them. We provide a complete infrastructure including parametric generators, rule-based verifiers, and heuristic solvers, enabling both evaluation and RLVR training.
    
    \item \textbf{Empirical validation and transfer learning:} Extensive experiments showing DMPO outperforms five strong baselines by 4.7\%-3.8\% on optimization tasks, 2\% on mathematical reasoning, and 2.3\% on out-of-domain tasks, with evidence that diversity-preserving training transfers to general reasoning capabilities.
\end{itemize}

\section{Related Work}\label{sec:related}

\textbf{Mode Collapse in Reinforcement Learning}\quad
Mode collapse—the tendency of policies to concentrate on a single solution—is a well-documented phenomenon in reinforcement learning. The connection to reverse KL divergence's mode-seeking behavior is established in the literature: \citet{murphy2012machine} show that minimizing $D_{\text{KL}}(q \| p)$ causes $q$ to concentrate on a single mode of $p$, while \citet{levine2018reinforcement} explicitly discusses this in the context of maximum entropy RL. However, despite this theoretical understanding, mode collapse remains pervasive in practice. \citet{ahmed2019understanding} document decreasing entropy during policy optimization, noting that standard RL methods converge to near-deterministic policies. \citet{eysenbach2021maximum} show that entropy regularization alone is insufficient to prevent collapse in multimodal reward landscapes. In language model training, \citet{yue2025does} and \citet{song2025outcome} observe that RLVR methods reduce solution diversity compared to the base model. While the theoretical connection between reverse KL and mode-seeking is known, \emph{practical solutions remain elusive}. Forward KL minimization $D_{\text{KL}}(p \| q)$ would provide mode-covering behavior, but requires sampling from the target distribution $p$, which is intractable.

\textbf{Diversity in Policy Optimization}\quad
Several approaches have been proposed to maintain diversity in RL. \textbf{Entropy regularization}~\cite{haarnoja2018soft,ahmed2019understanding} adds an entropy bonus to the reward, but \citet{eysenbach2021maximum} show this is insufficient to prevent mode collapse when the policy becomes confident about a suboptimal solution. \textbf{Intrinsic motivation}~\cite{pathak2017curiosity,burda2018exploration} augments rewards with exploration bonuses based on prediction error or state visitation counts, encouraging the policy to visit novel states, but these methods focus on state-space coverage rather than solution diversity and can be distracted by irrelevant novelty. \textbf{Ensemble methods}~\cite{elliott2021wisdom,zhumabekov2023ensembling} train multiple policies with different initializations, maintaining diversity across the ensemble, but scale poorly with ensemble size and lack theoretical guarantees for individual policy diversity. 
Recent work in LLM training has explored diversity from different angles. \citet{RLentropy,clipcov,wang2025beyond} constrain updates on high-covariance tokens to preserve stochasticity during training. \citet{chen2025pass} explicitly optimize pass@k metrics as rewards, encouraging the model to generate diverse responses where at least one succeeds. \citet{zhuang2025exploring,an2025polaris} propose adaptive temperature strategies that vary sampling temperature across different prompts or training stages to balance exploration and exploitation. \citet{flowrl} propose FlowRL, which applies GFlowNet principles by learning a global partition function to construct a Boltzmann target distribution. However, FlowRL still minimizes reverse KL $D_{\text{KL}}(\pi_\theta \| p^*)$ due to the intractability of sampling from the target distribution, and thus remains susceptible to mode-seeking behavior. Our work differs in that we approximate forward KL minimization $D_{\text{KL}}(p^* \| \pi_\theta)$ at the group level—a tractable approximation that provides mode-covering behavior without requiring global sampling or a learned partition function. This approach seamlessly integrates with GRPO's group-based advantage normalization and demonstrates consistent improvement.

\textbf{RLVR for Language Model Reasoning}\quad
Reinforcement Learning with Verifiable Rewards (RLVR) has become the dominant paradigm for training reasoning models~\cite{2025RLsurvey,guo2025deepseek,o1}. On-policy methods are particularly popular due to their stability and sample efficiency. PPO~\cite{ppo} uses clipped importance ratios to constrain updates. GRPO~\cite{grpo} extends PPO with group-based advantage normalization, improving stability for language models. GSPO~\cite{gspo} further refines this with guided search. GPG~\cite{gpg} incorporates policy gradients with generalized advantage estimation. However, all these methods minimize reverse KL divergence, making them susceptible to mode collapse. Our work shows that a simple modification—adding a distribution-matching term—prevents this collapse while maintaining the stability and efficiency of on-policy training. DMPO seamlessly extends GRPO's group-based formulation, making it practical for large-scale language model training.

\begin{figure*}[t]
    \centering
    \includegraphics[width=0.95\linewidth]{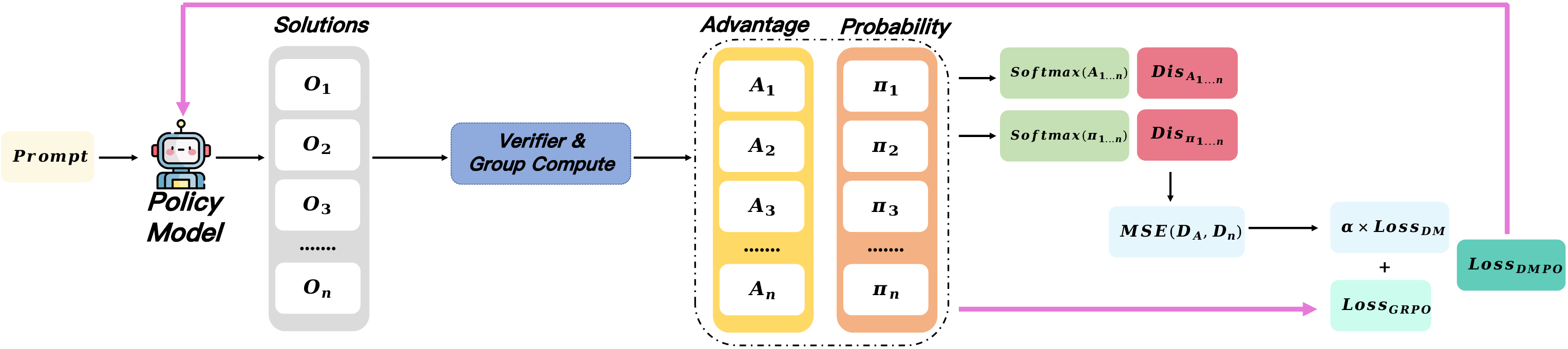}
    \caption{\textbf{Overview of DMPO.} The framework generates a set of trajectories $\{O_i\}$ to compute advantages $A$ and probabilities $\pi$. While standard \textbf{GRPO} (bottom flow) focuses on reward maximization, our novel \textbf{Distribution Matching} module explicitly aligns the policy's empirical distribution with the reward-induced target landscape via MSE. This dual-objective mechanism mitigates mode collapse, common in on-policy RL, and preserves the reasoning diversity crucial for solving complex reasoning problems.}
    \label{fig:pipeline}
    \vspace{-1em}
\end{figure*}

\section{Distribution-Matching Policy Optimization}\label{sec:method}

We propose DMPO, which mitigates mode collapse by approximating forward KL at the group level. We first show that standard on-policy RL minimize reverse KL, which causes mode-seeking and premature convergence, then derive DMPO as a practical solution providing mode-covering behavior through group-level distribution matching.

\subsection{Preliminaries: On-Policy RL and Mode Collapse}

\textbf{Group Relative Policy Optimization (GRPO).}\quad 
Consider a policy $\pi_\theta$ and reference policy $\pi_{\text{ref}}$. Given query $x$, we sample $G$ trajectories $\mathcal{O} = \{o_1, \ldots, o_G\}$ from the current policy $\pi_{\theta_{\text{old}}}$. GRPO~\cite{grpo} optimizes the policy using a clipped surrogate objective with group-normalized advantages. For trajectory $o_i$, the objective is:
\begin{equation}
\mathcal{J}_i(\theta) = \min \left( \rho_i(\theta) \hat{A}_i, \text{clip}(\rho_i(\theta), 1-\epsilon, 1+\epsilon) \hat{A}_i \right),
\label{eq:surrogate}
\end{equation}
where $\rho_i(\theta) = \pi_\theta(o_i|x) / \pi_{\theta_{\text{old}}}(o_i|x)$ is the importance ratio, and $\hat{A}_i$ is the group-normalized advantage (z-score of rewards within the group):
$
\hat{A}_i = \frac{r(o_i) - \text{mean}(\{r_j\})}{\text{std}(\{r_j\}) + \epsilon}.
$
The complete GRPO objective is:
\begin{equation}
\mathcal{L}_{\text{GRPO}}(\theta) = - \frac{1}{G} \sum_{i=1}^G \mathcal{J}_i(\theta) + \beta_{\text{KL}} D_{\text{KL}}(\pi_\theta \| \pi_{\text{ref}}),
\label{eq:grpo_total}
\end{equation}
where $\beta_{\text{KL}}$ is a hyperparameter controlling the strength of the KL penalty to the reference policy, preventing the policy from drifting too far from $\pi_{\text{ref}}$ for training stability.

\textbf{Policy Gradient as Reverse KL Minimization.}\quad
To understand why mode collapse occurs, we establish the connection between policy gradient methods and reverse KL divergence. In Maximum Entropy RL~\cite{ziebart2008maximum,levine2018reinforcement}, the objective is to maximize expected reward plus entropy:
$
\max_\theta \mathbb{E}_{\tau \sim \pi_\theta} \left[ r(\tau) + \alpha H(\pi_\theta) \right],
\label{eq:maxent_objective}
$
where $\alpha > 0$ is the temperature and $H(\pi_\theta) = -\mathbb{E}[\log \pi_\theta(\tau)]$ is the entropy. This is equivalent to minimizing reverse KL divergence to a Boltzmann target~\cite{levine2018reinforcement}:
\begin{equation}
\min_\theta D_{\text{KL}}(\pi_\theta \| p^*) \quad \text{where} \quad p^*(\tau) = \frac{\exp(r(\tau)/\alpha)}{Z},
\label{eq:reverse_kl}
\end{equation}
with partition function $Z = \sum_\tau \exp(r(\tau)/\alpha)$. Since $D_{\text{KL}}(\pi_\theta \| p^*) = -H(\pi_\theta) - \mathbb{E}[r(\tau)/\alpha] + \log Z$ and $\log Z$ is constant in $\theta$, minimizing the KL is equivalent to maximizing $\mathbb{E}[r(\tau)] + \alpha H(\pi_\theta)$.

Standard policy gradient methods like GRPO do not include explicit entropy regularization, corresponding to $\alpha \to 0$. In this limit, the target becomes $p^*(\tau) \propto \mathbb{I}[\tau = \arg\max r(\tau)]$—a Dirac delta on the highest-reward trajectory. Thus, policy gradient methods implicitly minimize reverse KL, inheriting its mode-seeking behavior. While GRPO includes $D_{\text{KL}}(\pi_\theta \| \pi_{\text{ref}})$ to prevent drift from the reference policy, this does not address mode-seeking toward high-reward trajectories—the policy still collapses to a single mode within the high-reward region.

\subsection{Forward KL Minimization: The Ideal Solution}

Having established that reverse KL causes mode collapse through mode-seeking, the natural solution is forward KL divergence $D_{\text{KL}}(p^* \| \pi_\theta)$, which is mode-covering~\cite{murphy2012machine}. Forward KL penalizes placing low probability where the target has high probability, forcing the policy to cover all modes of $p^*$ and preventing mode collapse.

However, forward KL is intractable:
\begin{equation}
D_{\text{KL}}(p^* \| \pi_\theta) = \mathbb{E}_{o \sim p^*} \left[ \log \frac{p^*(o)}{\pi_\theta(o)} \right].
\end{equation}
This requires sampling from $p^*$, which needs computing the partition function $Z = \sum_o \exp(r(o)/\alpha)$—exponential in trajectory length. For TSP with 20 cities ($\sim 10^{18}$ tours), this is infeasible. Standard on-policy RL therefore uses reverse KL, which only requires sampling from $\pi_\theta$.

\subsection{Group-Level Approximation of Forward KL}

Our key insight is to approximate forward KL at the \emph{group level} rather than globally. For a group of trajectories sampled from $\pi_\theta$ for prompt $x$, we construct a group-conditional target distribution and match the policy distribution to it.

\textbf{Group-Level Target Distribution.} \quad
Given a group $\mathcal{O} = \{o_1, \ldots, o_G\}$, we define the group-conditional Boltzmann distribution:
\begin{equation}
p(o_i \mid \mathcal{O}) = \frac{\exp(r(o_i)/\alpha)}{\sum_{j=1}^G \exp(r(o_j)/\alpha)}.
\label{eq:target_dist}
\end{equation}
This distribution represents the ideal probability allocation over the group—trajectories with higher rewards receive higher probability, but all trajectories maintain non-zero probability. Crucially, this only requires the \emph{group-level partition function} $Z_{\text{group}} = \sum_{j=1}^G \exp(r(o_j)/\alpha)$, which is trivially computable.

\textbf{Group-Level Policy Distribution.} \quad
To measure the policy's actual probability allocation over the group, we must account for length bias. Since $\pi_\theta(o) = \prod_{t=1}^{|o|} \pi_\theta(o_t | o_{<t})$ decays exponentially with trajectory length, we use length-normalized log-likelihood:
$
\phi(o_i) = \frac{1}{|o_i|} \sum_{t=1}^{|o_i|} \log \pi_\theta(o_{i,t} \mid o_{i,<t}, x).
\label{eq:phi_definition}
$
We then define the group-level policy distribution:
\begin{equation}
q_\theta(o_i \mid \mathcal{O}) = \frac{\exp(\phi(o_i))}{\sum_{j=1}^G \exp(\phi(o_j))}.
\label{eq:current_dist}
\end{equation}
This represents the policy's relative preference over the group, independent of trajectory length.

\paragraph{Distribution-Matching Loss.} We align the policy distribution $q_\theta$ with the target distribution $p$ by minimizing their divergence. The natural choice would be forward KL divergence $D_{\text{KL}}(p \| q_\theta)$, but we use mean squared error (MSE) for numerical stability:
\begin{equation}
\mathcal{L}_{\text{DM}}(\theta) = \frac{1}{G} \sum_{i=1}^G \left( p(o_i \mid \mathcal{O}) - q_\theta(o_i \mid \mathcal{O}) \right)^2.
\label{eq:dm_loss}
\end{equation}
MSE provides bounded gradient coefficients. Since both $p$ and $q_\theta$ are probability distributions, $|p(o_i) - q_\theta(o_i)| \leq 1$ for all $i$. The gradient is:
\begin{equation}
\nabla_\theta \mathcal{L}_{\text{DM}} = \frac{2}{G} \sum_{i=1}^G \underbrace{(q_\theta(o_i) - p(o_i))}_{\text{bounded by } \pm 1} \nabla_\theta q_\theta(o_i),
\end{equation}
ensuring stable training regardless of how the distributions evolve. In contrast, forward KL has gradient $-\sum_i \frac{p(o_i)}{q_\theta(o_i)} \nabla_\theta q_\theta(o_i)$, where the coefficient $p(o_i)/q_\theta(o_i)$ can grow arbitrarily large when $q_\theta(o_i) \to 0$. This is problematic when the policy becomes near-deterministic during training, potentially causing gradient explosion. Additionally, MSE approximates KL divergence via second-order Taylor expansion around $p = q$, providing similar optimization behavior when distributions are close while maintaining stability when they diverge.

\textbf{Unified Objective.} \quad
The complete DMPO objective combines reward maximization with distribution matching:
\begin{equation}
\mathcal{L}_{\text{DMPO}}(\theta) = \mathcal{L}_{\text{GRPO}}(\theta) + \lambda \mathcal{L}_{\text{DM}}(\theta).
\label{eq:total_loss}
\end{equation}
Here, $\mathcal{L}_{\text{GRPO}}$ drives the policy toward higher rewards (exploitation), while $\mathcal{L}_{\text{DM}}$ maintains diversity by preventing collapse to a single mode (exploration). The hyperparameter $\lambda$ balances these two objectives.

\subsection{Theoretical Analysis}\label{sec:theory}

We provide formal analysis of DMPO's properties, establishing guarantees on gradient stability, convergence behavior, and the exploration-exploitation tradeoff.

\begin{proposition}[Local Mode-Covering Behavior]
\label{prop:local_mode_covering}
The distribution-matching loss $\mathcal{L}_{\text{DM}}$ provides mode-covering behavior within each group. Specifically, if trajectory $o_i$ has high reward ($r_i > 0$) such that $p(o_i \mid \mathcal{O}) > \epsilon$, then minimizing $\mathcal{L}_{\text{DM}}$ forces $q_\theta(o_i \mid \mathcal{O}) > 0$. More precisely, if $\mathcal{L}_{\text{DM}} < \delta$, then:
$
q_\theta(o_i \mid \mathcal{O}) \geq p(o_i \mid \mathcal{O}) - \sqrt{G\delta}.
$
\end{proposition}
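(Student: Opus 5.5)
The bound follows directly from the definition of $\mathcal{L}_{\text{DM}}$ in \eqref{eq:dm_loss}: a small average squared error forces every individual squared error to be small. No property of the policy parametrization or of the Boltzmann target \eqref{eq:target_dist} is needed beyond both being probability vectors over the group.

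\textbf{Step 1: control a single term.} Fix the index $i$ of the high-reward trajectory. Every summand in $\mathcal{L}_{\text{DM}}$ is a square, hence nonnegative. Dropping all summands except the $i$-th therefore gives
\[
\frac{1}{G}\bigl(p(o_i \mid \mathcal{O}) - q_\theta(o_i \mid \mathcal{O})\bigr)^2 \;\le\; \mathcal{L}_{\text{DM}}(\theta) \;<\; \delta .
\]
Multiplying by $G$ and taking square roots yields $|p(o_i \mid \mathcal{O}) - q_\theta(o_i \mid \mathcal{O})| < \sqrt{G\delta}$.

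\textbf{Step 2: keep the one-sided bound.} Use only the lower side of the absolute-value inequality, $q_\theta(o_i \mid \mathcal{O}) > p(o_i \mid \mathcal{O}) - \sqrt{G\delta}$. This implies the stated (non-strict) inequality.

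\textbf{Step 3: the qualitative claim.} If $p(o_i \mid \mathcal{O}) > \epsilon$ and the loss is driven below $\delta$ with $\sqrt{G\delta} \le \epsilon$, i.e.\ $\delta \le \epsilon^2/G$, then $q_\theta(o_i \mid \mathcal{O}) > 0$. It is worth noting that $q_\theta(o_i \mid \mathcal{O}) > 0$ holds trivially anyway, since \eqref{eq:current_dist} is a softmax. The meaningful content is the quantitative lower bound: $q_\theta$ keeps mass of order $\epsilon$ on every trajectory that the target weights by at least $\epsilon$. This holds simultaneously for all such $i$, because the argument never used which index was chosen, and that simultaneity is the sense in which the loss is locally mode-covering.

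\textbf{Main obstacle.} The calculation itself is elementary; the only real care is interpretive. The phrase ``$r_i > 0$'' plays no role in the bound and should be treated as motivation, since only $p(o_i \mid \mathcal{O}) > \epsilon$ is used. The guarantee is also conditional on attaining $\mathcal{L}_{\text{DM}} < \delta$, not on the dynamics of the combined objective \eqref{eq:total_loss}. I would state both points explicitly rather than try to prove more.
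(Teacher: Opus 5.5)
Your proposal is correct and follows the paper's proof essentially line for line: bound the single $i$-th squared error by $G\,\mathcal{L}_{\text{DM}} < G\delta$, take square roots, and keep the lower side of the absolute-value inequality. Your additions are accurate refinements rather than a different route: you make the nonnegativity of the dropped summands explicit, you quantify the paper's ``sufficiently small $\delta$'' as $\delta \le \epsilon^2/G$, and you note that $q_\theta(o_i \mid \mathcal{O}) > 0$ already holds trivially because $q_\theta$ is a softmax.
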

\begin{proof}
By definition, $\mathcal{L}_{\text{DM}} = \frac{1}{G}\sum_i (p(o_i) - q_\theta(o_i))^2$. If $\mathcal{L}_{\text{DM}} < \delta$, then:
$
(p(o_i) - q_\theta(o_i))^2 < G\delta.
$
Taking square roots: $|p(o_i) - q_\theta(o_i)| < \sqrt{G\delta}$. Since $p(o_i) > \epsilon$, we have $q_\theta(o_i) > \epsilon - \sqrt{G\delta}$. For sufficiently small $\delta$, $q_\theta(o_i) > 0$.
\end{proof}
Unlike reverse KL (which can place zero probability on some modes), DMPO's MSE loss explicitly prevents the policy from ignoring high-advantage trajectories in the group. This is the mathematical guarantee of mode-covering behavior.\looseness-1

\begin{proposition}[Convergence to Proportional Sampling]
In the limit $\lambda \to \infty$, if $\mathcal{L}_{\text{DM}} \to 0$, then:
$
\pi_\theta(o_i \mid x) \propto \exp(r_i/\alpha),
$
meaning the policy samples trajectories proportionally to their exponentiated rewards.
\end{proposition}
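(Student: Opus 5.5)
The plan is to read the statement at the level of the group, which is the only level at which $\mathcal{L}_{\text{DM}}$ carries information. First, $\mathcal{L}_{\text{DM}} = \frac{1}{G}\sum_i (p(o_i\mid\mathcal{O}) - q_\theta(o_i\mid\mathcal{O}))^2$ is a sum of nonnegative terms. So $\mathcal{L}_{\text{DM}} \to 0$ forces each term to vanish, exactly as in the pointwise bound of Proposition~\ref{prop:local_mode_covering}: $|p(o_i)-q_\theta(o_i)| \le \sqrt{G\,\mathcal{L}_{\text{DM}}} \to 0$ for every $i$. The role of $\lambda \to \infty$ is to make the distribution-matching term dominate $\mathcal{L}_{\text{GRPO}}$ in \eqref{eq:total_loss}, so that minimizers of $\mathcal{L}_{\text{DMPO}}/\lambda$ approach minimizers of $\mathcal{L}_{\text{DM}}$. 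Since $\mathcal{L}_{\text{DM}} \ge 0$ and the value $0$ is attainable by any $\theta$ whose group-level softmax reproduces the Boltzmann weights, I would simply take the hypothesis $\mathcal{L}_{\text{DM}} \to 0$ as given rather than prove attainability.

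The second step converts equality of the two softmaxes into equality of logits up to an additive constant. If $q_\theta(\cdot\mid\mathcal{O}) = p(\cdot\mid\mathcal{O})$, then by \eqref{eq:target_dist} and \eqref{eq:current_dist}
\begin{equation*}
\frac{\exp(\phi(o_i))}{\sum_j \exp(\phi(o_j))} = \frac{\exp(r_i/\alpha)}{\sum_j \exp(r_j/\alpha)} \quad\Longrightarrow\quad \phi(o_i) = r_i/\alpha + c(\mathcal{O}),
\end{equation*}
where $c(\mathcal{O}) = \log\sum_j e^{\phi(o_j)} - \log Z_{\text{group}}$ is independent of $i$. Exponentiating gives $\exp(\phi(o_i)) \propto \exp(r_i/\alpha)$ within the group. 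In the limit case (not exact equality) I would argue by continuity of $\log$ on the positive simplex: $p$ has full support on the group, so the logit differences $\phi(o_i)-\phi(o_j)$ converge to $(r_i-r_j)/\alpha$.

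The third step, which I expect to be the main obstacle, is passing from $\exp(\phi(o_i))$ to $\pi_\theta(o_i\mid x)$. Because $\phi$ is a length-normalized log-likelihood, $\exp(\phi(o_i)) = \pi_\theta(o_i\mid x)^{1/|o_i|}$, not $\pi_\theta(o_i\mid x)$ itself. The claimed proportionality therefore holds literally only when all trajectories in the group have equal length $L$. In that case $\pi_\theta(o_i\mid x) = \exp(L\phi(o_i)) \propto \exp(L r_i/\alpha)$, which matches the statement after rescaling the temperature $\alpha \mapsto \alpha/L$, or exactly if one interprets $\pi_\theta$ as the length-normalized (geometric-mean per-token) probability. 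I would state the result under that interpretation: ``$\pi_\theta(o_i\mid x)$'' means the length-normalized policy probability $\exp(\phi(o_i))$, restricted to the support $\mathcal{O}$. Under that reading the proportionality is exactly the conclusion of the second step.

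Finally, I would remark that the conclusion concerns relative probabilities within each sampled group. Extending it to a global statement $\pi_\theta \propto p^*$ would require the matching to hold simultaneously over all groups, with consistent constants $c(\mathcal{O})$. That would follow if every pair of trajectories co-occurs in some group with positive probability, since the pairwise ratio constraints then chain together. I would mention this only as an informal consequence rather than prove it.
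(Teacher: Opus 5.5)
The paper gives no proof of this proposition; it is followed only by the one-line remark about GFlowNet-like sampling. So there is no argument of the paper's to compare yours against.

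\textbf{Steps 1 and 2.} These are correct and are the natural argument. From $\mathcal{L}_{\text{DM}}\to 0$ you get $q_\theta(o_i\mid\mathcal{O})\to p(o_i\mid\mathcal{O})$ for every $i$. Since $p$ has full support on the group, this gives $\phi(o_i)-\phi(o_j)\to (r_i-r_j)/\alpha$.

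If you want $\lambda\to\infty$ to do real work rather than serve as a heuristic, the standard penalty argument suffices. For a fixed group the advantages are fixed and clipping caps each $\mathcal{J}_i$ from above, so $\mathcal{L}_{\text{GRPO}}\ge -B$ for some constant $B$. Hence if some $\theta^\star$ has $\mathcal{L}_{\text{DM}}(\theta^\star)=0$, every minimizer $\theta_\lambda$ of $\mathcal{L}_{\text{DMPO}}$ satisfies $\mathcal{L}_{\text{DM}}(\theta_\lambda)\le(\mathcal{L}_{\text{GRPO}}(\theta^\star)+B)/\lambda$.

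\textbf{Step 3.} This is the crux, and the obstruction you found is a defect of the statement, not of your proof. You can say so outright rather than hedging. Let $b$ be the end-of-sequence token, and take:
\begin{itemize}
\item $G=2$, $o_1=(b)$, $o_2=(a,b)$, $r_1=r_2$;
\item $\pi_\theta(b\mid x)=\pi_\theta(a\mid x)=\pi_\theta(b\mid x,a)=\tfrac12$.
\end{itemize}
Then $\phi(o_1)=\phi(o_2)=\log\tfrac12$, so $q_\theta=p=(\tfrac12,\tfrac12)$ and $\mathcal{L}_{\text{DM}}=0$. Yet $\pi_\theta(o_1\mid x)=\tfrac12\neq\tfrac14=\pi_\theta(o_2\mid x)$.

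Your equal-length computation adds a second failure. With common length $L$, $\mathcal{L}_{\text{DM}}=0$ gives temperature $\alpha/L$, which contradicts the stated conclusion whenever $L>1$ and the group's rewards are not all equal. Together these show the hypotheses do not imply $\pi_\theta(o_i\mid x)\propto\exp(r_i/\alpha)$.

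What is actually true, and what your argument proves, is this: $\mathcal{L}_{\text{DM}}=0$ on a group if and only if $\pi_\theta(o_i\mid x)^{1/|o_i|}\propto\exp(r_i/\alpha)$ on that group. Present this as a corrected proposition rather than as an interpretation of the original one.

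\textbf{Global claim.} The gloss ``the policy samples trajectories proportionally to their exponentiated rewards'' is not rescued by your chaining remark. Under your co-occurrence assumption you get $\phi(o)=r(o)/\alpha+c$ globally, i.e.\ $\pi_\theta(o\mid x)=\exp\bigl(|o|\,(r(o)/\alpha+c)\bigr)$. This is not proportional to $\exp(r(o)/\alpha)$ in general; for constant length $L$ it is proportional to $\exp(L\,r(o)/\alpha)$.
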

This shows DMPO achieves GFlowNet-like proportional sampling without requiring a backward policy.

\begin{proposition}[Smooth Interpolation via $\lambda$]
\label{prop:lambda_interpolation}
The hyperparameter $\lambda$ provides smooth interpolation between pure exploitation and diversity preservation:
\begin{equation*}
\lim_{\lambda \to 0} \mathcal{L}_{\text{DMPO}} = \mathcal{L}_{\text{GRPO}}, \quad \lim_{\lambda \to \infty} q_\theta \to p.
\end{equation*}
\end{proposition}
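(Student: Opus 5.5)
The plan is to treat the two limits separately, since they are claims of different kinds.

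The $\lambda \to 0$ limit is a statement about the objective itself. By \eqref{eq:total_loss}, $\mathcal{L}_{\text{DMPO}}(\theta) - \mathcal{L}_{\text{GRPO}}(\theta) = \lambda \mathcal{L}_{\text{DM}}(\theta)$. For each fixed $\theta$ and group $\mathcal{O}$, $\mathcal{L}_{\text{DM}}(\theta) \le 1$, because every summand in \eqref{eq:dm_loss} is bounded by $1$ (as the paper notes, $|p-q_\theta|\le 1$). Hence the difference is at most $\lambda$ in absolute value. This gives convergence to $\mathcal{L}_{\text{GRPO}}$, and in fact uniform convergence in $\theta$. I would also note that the same argument applies to gradients, since $\nabla_\theta \mathcal{L}_{\text{DM}}$ has coefficients bounded by $2/G$ times $\|\nabla_\theta q_\theta\|$. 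So on any compact parameter set where $\nabla q_\theta$ is bounded, the DMPO update direction converges to the GRPO one. This step is routine.

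For the $\lambda \to \infty$ limit, I would make the statement precise as follows: any minimizer $\theta_\lambda$ of $\mathcal{L}_{\text{DMPO}}$ satisfies $\mathcal{L}_{\text{DM}}(\theta_\lambda) \to 0$, and therefore $q_{\theta_\lambda} \to p$ pointwise on the group. The argument is a penalty-method comparison. I would assume realizability, namely that some $\theta^\star$ exists with $\mathcal{L}_{\text{DM}}(\theta^\star)=0$; equivalently, the length-normalized log-likelihoods $\phi(o_i)$ can be set to $r(o_i)/\alpha + c$. I would also assume $\mathcal{L}_{\text{GRPO}}$ is bounded below by some $m$, which holds on a compact parameter set and also because the clipped surrogate is bounded given bounded advantages. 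Optimality of $\theta_\lambda$ then gives
\begin{equation*}
m + \lambda \mathcal{L}_{\text{DM}}(\theta_\lambda) \le \mathcal{L}_{\text{DMPO}}(\theta_\lambda) \le \mathcal{L}_{\text{DMPO}}(\theta^\star) = \mathcal{L}_{\text{GRPO}}(\theta^\star),
\end{equation*}
so $\mathcal{L}_{\text{DM}}(\theta_\lambda) \le (\mathcal{L}_{\text{GRPO}}(\theta^\star) - m)/\lambda \to 0$. Applying the bound from Proposition~\ref{prop:local_mode_covering} in both directions then gives $|q_{\theta_\lambda}(o_i) - p(o_i)| \le \sqrt{G\,\mathcal{L}_{\text{DM}}(\theta_\lambda)} \to 0$ for every $i$.

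The main obstacle is the second limit, and specifically its hypotheses. Without realizability, or without a lower bound on the GRPO term, the penalty argument fails. In that case the best available claim is that $\mathcal{L}_{\text{DM}}(\theta_\lambda)$ converges to its infimum, not to zero. I would therefore state both assumptions explicitly. I would justify realizability by noting that each $q_\theta$ depends only on the $G$ scalars $\phi(o_i)$, and a sufficiently expressive policy can adjust these freely; the constraint involves only differences $\phi(o_i)-\phi(o_j)$, so it imposes $G-1$ conditions. If the limit is instead read dynamically, as gradient flow, I would replace the argument with the observation that the DM gradient dominates once it is scaled by $\lambda$. That version would give only convergence to a stationary point of $\mathcal{L}_{\text{DM}}$, so the minimizer version is the one I would prove.
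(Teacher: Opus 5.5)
The paper gives no proof of this proposition. The first limit is meant to be read off from Eq.~\eqref{eq:total_loss} at $\lambda=0$. The second is implicitly the heuristic that an infinite weight forces $\mathcal{L}_{\text{DM}}\to 0$, after which Proposition~\ref{prop:local_mode_covering} yields $q_\theta\to p$.

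Your first part is that same reading, sharpened to the bound $|\mathcal{L}_{\text{DMPO}}-\mathcal{L}_{\text{GRPO}}|\le\lambda$, uniform in $\theta$. Your second part is where you genuinely add something. The penalty comparison against a realizable $\theta^\star$ is exactly what licenses the step from $\lambda\to\infty$ to $\mathcal{L}_{\text{DM}}\to 0$, which, as you note, fails without realizability. It also gives an explicit rate, $|q_{\theta_\lambda}(o_i)-p(o_i)|\le\sqrt{G(\mathcal{L}_{\text{GRPO}}(\theta^\star)-m)/\lambda}$, which the paper's informal statement lacks. Phrasing realizability as $\phi(o_i)=r(o_i)/\alpha+c$ is also the right condition: the free constant absorbs the constraint $\phi\le 0$, and repeated trajectories in a group carry equal rewards, so they impose nothing extra.

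Two small repairs:
\begin{itemize}
\item The lower bound on $\mathcal{L}_{\text{GRPO}}$ is automatic and needs no compactness, but your stated reason is off: the clipped surrogate is not bounded. For $\hat A_i<0$ and $\rho_i\to\infty$, $\mathcal{J}_i=\rho_i\hat A_i\to-\infty$. What holds, and is all you need, is the one-sided bound $\mathcal{J}_i\le(1+\epsilon)|\hat A_i|$ for either sign of $\hat A_i$, with $\epsilon$ the clip parameter. Together with $\beta_{\text{KL}}D_{\text{KL}}\ge 0$ and the boundedness of the z-scored advantages, this gives $\mathcal{L}_{\text{GRPO}}\ge-(1+\epsilon)\max_i|\hat A_i|$.
\item A minimizer $\theta_\lambda$ need not exist, so run the argument with an $\eta$-minimizer instead. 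This only adds $\eta/\lambda$ to your bound on $\mathcal{L}_{\text{DM}}(\theta_\lambda)$.
\end{itemize}
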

This allows practitioners to control the exploration-exploitation tradeoff via a single hyperparameter.

\begin{table*}[t]
    \caption{Performance of reasoning MLLMs, general MLLMs, and our trained MLLMs on MM-NP-Bench.}
    \centering
    \small
    \resizebox{\textwidth}{!}{
    \begin{tabular}{lcccccccccccc}
    \toprule
    \multirow{2}{*}{\textbf{Model}} &
      \multicolumn{2}{c}{\textbf{Constraint}} & \multicolumn{2}{c}{\textbf{Covering}} & \multicolumn{2}{c}{\textbf{Partition}} & \multicolumn{2}{c}{\textbf{Subgraph}} & \multicolumn{2}{c}{\textbf{Path}} & \multicolumn{2}{c}{\textbf{Overall}} \\
    \cmidrule(lr){2-3} \cmidrule(lr){4-5} \cmidrule(lr){6-7} \cmidrule(lr){8-9} \cmidrule(lr){10-11} \cmidrule(lr){12-13}
     & 
      \textbf{SR} & \textbf{QR} & \textbf{SR} & \textbf{QR} & \textbf{SR} & \textbf{QR} & \textbf{SR} & \textbf{QR} & \textbf{SR} & \textbf{QR} & \textbf{SR} & \textbf{QR} \\
    \midrule
    \rowcolor{lightgray}
    \multicolumn{13}{c}{\emph{Proprietary LLMs}} \\ 
    \midrule
    Gemini-3.0-Flash & 30.5 & 28.7 & \textbf{64.0} & \textbf{62.8} & \textbf{100.0} & \textbf{95.8} & 89.0 & 51.9 & \textbf{67.0} & \textbf{58.4} & \textbf{70.1} & \textbf{59.5} \\
    o4-mini & 24.5 & 21.0 & 37.5 & 36.6 & 99.5 & 74.7 & \textbf{90.0} & 51.0 & 49.5 & 19.3 & 60.2 & 40.5 \\
    GPT-4o-1120 & 12.5 & 8.4 & 4.0 & 3.3 & 47.5 & 34.5 & 66.0 & 28.5 & 19.5 & 7.1 & 29.9 & 16.4 \\
    Qwen3-VL-235B-A22B-Thinking & 3.0 & 3.2 & 2.0 & 2.0 & 97.5 & 75.6 & 82.0 & 44.2 & 46.5 & 19.2 & 46.2 & 28.8 \\
    Qwen3-VL-235B-A22B-Instruct & 5.0 & 4.7 & 6.0 & 4.9 & \textbf{100.0} & 76.1 & 71.0 & 36.1 & 39.5 & 18.4 & 44.3 & 28.0 \\
    InternVL3.5-241B-A28B & 9.0 & 7.9 & 10.5 & 9.1 & 86.5 & 63.1 & 68.5 & 29.4 & 32.5 & 10.1 & 41.4 & 23.9 \\
    \midrule
    \rowcolor{lightgray}
    \multicolumn{13}{c}{\emph{Open-Source LLMs}} \\ 
    \midrule
    Qwen3-VL-32B-Thinking & 1.5 & 1.1 & 13.5 & 12.0 & 95.5 & 69.1 & 76.0 & 39.6 & 36.5 & 14.4 & 44.6 & 27.2 \\
    Qwen3-VL-32B-Instruct & 2.5 & 1.1 & 3.0 & 2.1 & 99.0 & 73.2 & 76.5 & 39.8 & 28.0 & 12.5 & 41.8 & 25.7 \\
    Qwen3-VL-30B-A3B-Thinking & 8.5 & 6.6 & 2.5 & 1.7 & 95.0 & 69.3 & 69.5 & 33.9 & 34.0 & 13.9 & 41.9 & 25.1 \\
    Qwen3-VL-30B-A3B-Instruct & 10.0 & 5.4 & 9.5 & 5.8 & 91.0 & 65.2 & 62.0 & 27.0 & 37.0 & 13.8 & 41.9 & 23.5 \\
    Qwen3-VL-8B-Thinking & 0.5 & 0.5 & 0.0 & 0.0 & 71.5 & 50.6 & 59.5 & 22.6 & 32.5 & 11.2 & 32.8 & 17.0 \\
    Qwen3-VL-8B-Instruct & 7.0 & 2.8 & 2.0 & 1.0 & 95.0 & 68.2 & 59.0 & 26.0 & 40.0 & 13.9 & 40.6 & 22.4 \\
    InternVL3.5-38B & 3.5 & 2.9 & 5.0 & 3.7 & 98.0 & 70.1 & 51.5 & 18.9 & 24.5 & 7.3 & 36.5 & 20.6 \\
    InternVL3.5-8B & 3.0 & 1.2 & 3.0 & 2.6 & 92.0 & 66.1 & 42.5 & 14.2 & 32.0 & 10.9 & 34.5 & 19.0 \\
    Qwen2.5-VL-72B-Instruct & 11.0 & 4.8 & 6.5 & 3.4 & 95.5 & 67.9 & 56.0 & 18.7 & 32.0 & 10.8 & 40.2 & 21.1 \\
    Qwen2.5-VL-32B-Instruct & 5.5 & 1.7 & 3.0 & 2.0 & 75.5 & 54.8 & 47.5 & 16.9 & 36.0 & 12.8 & 33.5 & 17.6 \\
    Qwen2.5-VL-7B-Instruct & 12.5 & 4.6 & 19.5 & 10.5 & 66.0 & 38.6 & 50.0 & 15.0 & 4.5 & 1.6 & 30.5 & 14.1 \\
    \midrule
    DMPO-VL-7B (Ours) & \textbf{62.5} & \textbf{31.5} & 25.5 & 22.1 & 98.5 & 73.1 & 72.0 & \textbf{67.4} & 51.0 & 21.6 & 61.9 & 43.1 \\
    \bottomrule
    \end{tabular}
    }
    \label{tab:indomain}
    \vspace{-0.8em}
\end{table*}

\section{MM-NP-Bench: A Testbed for Diverse Reasoning}\label{sec:benchmark}
Having introduced DMPO as a solution to mitigate mode collapse, we now present MM-NP-Bench, a benchmark designed to validate DMPO's effectiveness. We need a testbed where mode collapse is easily observable, solution quality is precisely measurable, and diverse high-quality solutions exist. NP-hard combinatorial optimization provides these properties.

NP-hard problems feature \textbf{exponentially large multimodal solution spaces}—a TSP instance with 20 cities has $\sim 10^{18}$ valid tours with multiple local optima. A model that prematurely converges will settle for the first valid solution, missing better alternatives. They provide \textbf{objective quality metrics}—precisely defined objectives (minimize tour length, maximize clique size) enable comparing solutions against heuristic baselines, transforming mode collapse into a quantifiable metric. They \textbf{decouple feasibility from optimality}—we can measure constraint satisfaction (Success Rate) and optimization quality (Quality Ratio) separately. Mode collapse produces a characteristic pattern: high SR (learns constraints) but low QR (fails to optimize).

NP-Bench~\cite{npengine} provides text-based formulations of 10 NP-hard tasks. We extend it to the multimodal setting with visual representations, which offer three advantages: graph structures are more naturally expressed visually than through verbose adjacency lists, visual representations enable evaluation of vision-language models, and visual formats reduce specification ambiguity.

\subsection{Benchmark Design}

\textbf{Task Composition.} \quad
MM-NP-Bench comprises 10 tasks across 5 categories: \textit{Path Planning} (TSP, Hamiltonian Cycle), \textit{Covering} (Vertex Cover, Dominating Set), \textit{Graph Partition} (Minimum Cut, Maximum Cut), \textit{Substructure} (Maximum Clique, Maximum Independent Set), and \textit{Constraints} (Graph Coloring, Feedback Vertex Set). See Appendix~\ref{sec:benchmark_appendix} for complete task descriptions.

\textbf{Infrastructure.} \quad
MM-NP-Bench provides complete infrastructure enabling both evaluation and RLVR training. The system consists of three integrated components working together to support the full research pipeline from instance generation to solution evaluation. Each task features: (1) parametric generators producing instances with textual, visual, and symbolic representations, (2) rule-based verifiers checking constraint satisfaction with $O(E)$ or $O(V^2)$ complexity, and (3) heuristic solvers (Table~\ref{tab:heuristic_solvers}) computing near-optimal reference solutions. See Appendix~\ref{sec:benchmark_Details} for implementation details. Figure~\ref{fig:pipeline} illustrates how these three components work together: the generator creates instances with multiple representations, the verifier checks constraint satisfaction, and the heuristic solver computes reference solutions, enabling both rigorous evaluation and reward computation for RL training. A detailed task case from MM-NP-Bench is shown in Figure~\ref{fig:data}.

\subsection{Evaluation Metrics}

MM-NP-Bench uses dual-metric evaluation to make mode collapse quantitatively observable by decoupling constraint satisfaction from quality optimization.

\textbf{Success Rate (SR).} \quad 
The percentage of test instances where the model generates a valid solution satisfying all problem constraints.
For example, in TSP, a valid solution must visit every city exactly once and form a closed tour. High SR indicates the model understands problem constraints and can generate feasible solutions, but says nothing about solution quality—a model could achieve high SR by always generating the same valid but suboptimal solution.

\textbf{Quality Ratio (QR).} \quad 
For valid solutions, QR measures optimization quality relative to the heuristic solver's reference solution $V^*$:
\begin{equation*}
\text{QR}(x_i, o_i) = \begin{cases} 
V_{\text{model}}(o_i) / V^*(x_i) & \text{(maximization tasks)} \\
V^*(x_i) / V_{\text{model}}(o_i) & \text{(minimization tasks)} \\
0 & \text{(invalid solutions)}
\end{cases}
\label{eq:qr}
\end{equation*}
where $V_{\text{model}}(o_i)$ is the objective value of the model's solution (e.g., tour length for TSP, number of colors for Graph Coloring).

QR directly measures a model's ability to discover high-quality solutions through exploration. A model that prematurely converges (mode collapse) will plateau at a suboptimal QR, while a model that maintains exploration will continue discovering better solutions, achieving higher QR. Success Rate complements this by ensuring methods don't sacrifice feasibility for quality—a method that improves both SR and QR demonstrates effective exploration that discovers more valid solutions and higher-quality ones.

\begin{table*}[t]
    \caption{Performance comparison of various RLVR methods on MM-NP-Bench.}
    \centering
    \small
    \resizebox{\textwidth}{!}{
    \begin{tabular}{lcccccccccccc}
    \toprule
    \multirow{2}{*}{\textbf{Model}} &
      \multicolumn{2}{c}{\textbf{Constraint}} & \multicolumn{2}{c}{\textbf{Covering}} & \multicolumn{2}{c}{\textbf{Partition}} & \multicolumn{2}{c}{\textbf{Subgraph}} & \multicolumn{2}{c}{\textbf{Path}} & \multicolumn{2}{c}{\textbf{Overall}} \\
    \cmidrule(lr){2-3} \cmidrule(lr){4-5} \cmidrule(lr){6-7} \cmidrule(lr){8-9} \cmidrule(lr){10-11} \cmidrule(lr){12-13}
     & 
      \textbf{SR} & \textbf{QR} & \textbf{SR} & \textbf{QR} & \textbf{SR} & \textbf{QR} & \textbf{SR} & \textbf{QR} & \textbf{SR} & \textbf{QR} & \textbf{SR} & \textbf{QR} \\
    \midrule
    Qwen2.5-7B-Instruct & 23.0 & 6.9 & 24.0 & 12.3 & 69.5 & 32.7 & 46.5 & 20.0 & 20.0 & 6.6 & 36.6 & 15.7 \\
    \midrule
    GRPO~\cite{grpo} & 54.5 & 21.2 & 18.0 & 17.7 & 87.0 & 67.4 & 68.0 & 65.9 & 51.0 & 19.6 & 55.7 & 38.4 \\
    GSPO~\cite{gspo} & 62.0 & 31.1 & 29.5 & \textbf{27.7} & 80.5 & 58.7 & 65.5 & 62.3 & 51.0 & 18.6 & 57.7 & 39.7 \\
    GPG~\cite{gpg} & 52.0 & 23.7 & 22.5 & 17.4 & 78.4 & 62.4 & 71.4 & 64.1 & 48.5 & 16.6 & 54.6 & 36.8 \\
    ClipCov~\cite{clipcov} & 54.5 & 26.2 & 23.5 & 19.0 & 82.5 & 63.9 & 70.0 & 63.8 & \textbf{51.5} & 18.8 & 56.4 & 38.3 \\
    FlowRL~\cite{flowrl} & 56.5 & 23.8 & {36.0} & 26.2 & 88.0 & 64.1 & 68.5 & 64.5 & 44.0 & 16.1 & 58.6 & 38.9 \\
    GRPOPass@K (Meta)~\cite{metapassk} & 60.5 & 20.7 & 42.5 & 21.1 & 78.5 & 56.0 & 56.0 & 53.7 & 42.5 & 14.9 & 56.0 & 33.3 \\
    GRPOPass@K (Seed)~\cite{seedpassk}& \textbf{64.5} & 23.4 & \textbf{47.0} & 18.4 & 83.0 & 56.4 & \textbf{73.5} & \textbf{59.6} & 45.5 & 16.1 & 62.7 & 34.8 \\
    \midrule
    DMPO (Ours) & {62.5} & \textbf{31.5} & 25.5 & 22.1 & \textbf{98.5} & \textbf{73.1} & {72.0} & \textbf{67.4} & {51.0} & \textbf{21.6} & {61.9} & \textbf{43.1} \\
    \bottomrule
    \end{tabular}
    }
    \label{tab:rl_id}
    \vspace{-0.5em}
\end{table*}

\begin{table*}[t]
    \caption{Performance comparison of various RLVR methods on six mathematical and text-only optimization reasoning benchmarks.}
    \centering
    \small
    \resizebox{\textwidth}{!}{
    \begin{tabular}{lccccccccccc}
    \toprule
    \multirow{2}{*}{\textbf{Method}} & \multicolumn{7}{c}{\textbf{Math}} & \multicolumn{2}{c}{\textbf{NP-Bench}} \\
    \cmidrule(lr){2-8} \cmidrule(lr){9-10}
     & \textbf{Math500} & \textbf{OlympiadBench} & \textbf{Minerva} & \textbf{aime24} & \textbf{aime25} & \textbf{AMC} & \textbf{Average} & \textbf{SR} & \textbf{QR} \\
    \midrule
    Backbone & 63.6 & 28.3 & 21.7 & 14.0 & 6.9 & 45.6 & 30.00 & 29.6 & 14.6 \\
    \midrule
    GRPO~\cite{grpo} & 83.8 & 44.9 & 39.7 & 17.8 & 14.1 & 54.7 & 42.5 & 83.4 & 40.1 \\
    GSPO~\cite{gspo} & 83.2 & 44.7 & 39.3 & 18.8 & 12.6 & 53.9 & 42.1 & 82.6 & 41.2 \\
    GPG~\cite{gpg} & 79.6 & 42.7 & 40.4 & 16.3 & 10.6 & 48.8 & 39.7 & \textbf{88.6} & 32.7 \\
    ClipCov~\cite{clipcov}& 81.4 & 46.7 & 38.6 & 19.1 & 13.8 & 54.5 & 42.3 & 87.2 & 41.3 \\
    FlowRL~\cite{flowrl} & 82.4 & 44.0 & 37.6 & 16.0 & 11.1 & 50.2 & 40.2 & -- & -- \\
    \midrule
    DMPO (Ours)& \textbf{85.6} & \textbf{48.4} & \textbf{41.2} & \textbf{20.2} & \textbf{15.0} & \textbf{56.3} & \textbf{44.5} & 85.3 & \textbf{43.9} \\
    \bottomrule
    \end{tabular}
    }
    \label{tab:mathnp}
    \vspace{-1.5em}
\end{table*}

\subsection{Dataset and RLVR Training Setup}

\textbf{Test Set.} \quad
1,000 instances (100 per task) with difficulty calibrated to challenge state-of-the-art models. Difficulty is controlled via generation parameters (e.g., city count and spatial distribution for TSP, graph density for Graph Coloring).

\textbf{Training Sets.} \quad 
We generate \textbf{NP-10K} (10,000 text-only instances) and \textbf{MM-NP-10K} (10,000 multimodal instances with visual representations). Training instances are filtered via rejection sampling to maintain SR $\in [0.05, 0.8]$ on base models, addressing the cold-start problem by ensuring sufficient positive rewards while preserving challenge.

\textbf{Reward Function.} \quad
For RLVR training, we define a composite reward encouraging both format compliance and solution quality:
$
R(o) = r_{\text{fmt}}(o) + r_{\text{opt}}(o),
$
where $r_{\text{fmt}}(o) = 0.1$ if the output follows the required format (Chain-of-Thought reasoning followed by \texttt{Answer: [Solution]}), and $r_{\text{opt}}(o) = \text{QR}(o)$ if the solution is valid, 0 otherwise. 

We use Quality Ratio (QR) rather than binary validity as the optimization reward for a crucial reason: binary rewards (1 if valid, 0 if invalid) offer no distinction between valid solutions of varying quality. In binary reward settings, all valid solutions are treated equally, providing no incentive for the model to improve quality once it finds a valid solution. This setup inevitably leads to mode collapse, as the model has no motivation to explore beyond the initial valid solution. In contrast, QR-based rewards offer a fine-grained signal: better solutions receive higher rewards, thereby encouraging further optimization. This reward structure is vital for studying mode collapse, as it ensures that the model receives the gradient signal needed for effective optimization. Additionally, differences in achieved QR serve as a diagnostic tool, revealing whether methods successfully leverage the signal or converge prematurely. QR-based rewards also implicitly create a learning curriculum: models first learn to comply with the format (small, consistent rewards), then to satisfy constraints (positive QR), and ultimately to optimize quality (maximizing QR).

\subsection{MM-NP-Bench Performance}
To establish benchmark difficulty and demonstrate that MM-NP-Bench effectively differentiates model capabilities, we evaluate 17 state-of-the-art vision-language models on the test set without any RL training. Table~\ref{tab:indomain} presents complete results across all tasks. Several key observations emerge from this baseline evaluation. First, even the best proprietary model (Gemini-3.0-Flash) achieves only 59.5\% QR, indicating significant room for improvement and confirming the benchmark is challenging for frontier models. Second, the best open-source model (Qwen3-VL-235B-A22B-Thinking) achieves 28.8\% QR, substantially below proprietary models, suggesting the benchmark effectively differentiates model capabilities. Third, all models exhibit substantial performance variation across task categories: Partition tasks (Graph Coloring, Bin Packing) show the highest QR (60-70\%), indicating these are relatively easier to optimize, while Path tasks (TSP, VRP) and Constraint tasks (SAT, Graph Isomorphism) show the lowest QR (15-25\%), indicating these require more sophisticated exploration strategies. Fourth, model scaling shows diminishing returns: increasing model size from 7B to 72B parameters improves SR more than QR (relative improvement of 15\% vs 8\%), suggesting larger models learn constraint satisfaction more easily than optimization strategies. These baseline results validate MM-NP-Bench as a challenging testbed that reveals optimization limitations across diverse models and scales, providing an ideal environment for evaluating whether DMPO's distribution matching prevents mode collapse and enables discovery of higher-quality solutions.

\begin{table*}[t]
    \caption{Performance on visual reasoning benchmarks. We compare models trained on: (1) baseline (no RL), (2) MM-NP-Bench only, (3) MM-16K only, and (4) their combination.}
    \centering
    \small
    \resizebox{\textwidth}{!}{
    \begin{tabular}{lcccccccccc}
    \toprule
    \textbf{Dataset \& Method} & \textbf{LogicVista} & \textbf{GSM8K-V} & \textbf{MathVista} & \textbf{MathVision} & \textbf{MathVerse-V} & \textbf{Visulogic} & \textbf{WeMath} & \textbf{Average} \\
    \midrule
    Qwen2.5-VL-7B-Instruct (Backbone) & 44.5 & 13.6 & 68.3 & 24.5 & 37.3 & 22.6 & 33.1 & 34.8 \\
    \midrule
    GRPO (MM-NP) & 44.5 & 13.2 & 71.3 & 25.1 & 39.0 & 25.3 & \textbf{39.5} & 36.8 \\
    DMPO (MM-NP)& 46.1 & 15.1 & 69.3 & 26.9 & 40.7 & 25.9 & 35.4 & 37.1 \\
    \midrule
    GRPO (MM-16K) & 45.2 & 18.3 & 71.7 & 26.6 & 32.5 & 25.8 & 37.2 & 36.8 \\
    DMPO (MM-16K) & 48.1 & 13.7 & 72.3 & 26.5 & 43.3 & 25.4 & 34.8 & 37.7 \\
    \midrule
    GRPO (MM-16K + MM-NP) & 45.6 & 15.5 & \textbf{72.8} & 27.0 & \textbf{47.8} & 24.1 & 42.4 & 39.3 \\
    DMPO (MM-16K + MM-NP) & \textbf{48.7} & \textbf{18.5} & 72.3 & \textbf{28.0} & 46.0 & \textbf{26.0} & 39.0 & \textbf{39.8} \\
    \bottomrule
    \end{tabular}
    }
    \label{tab:ood}
    \vspace{-1em}
\end{table*}

\section{Experiments}\label{sec:experiments}

We validate DMPO through three research questions: (1) Does DMPO prevent mode collapse on optimization tasks? (2) Does DMPO generalize beyond optimization to mathematical reasoning? (3) Does diversity-preserving training transfer to out-of-domain tasks? We compare DMPO against five strong on-policy RL baselines: GRPO~\cite{grpo}, GSPO~\cite{gspo}, GPG~\cite{gpg}, ClipCov~\cite{clipcov}, FlowRL~\cite{flowrl}, GRPOPass@K (Meta)~\cite{metapassk} and GRPOPass@K (Seed)~\cite{seedpassk}. Implementation details are provided in Appendix~\ref{sec:experiments_appendix}.

\textbf{DMPO Prevents Mode Collapse.}\quad
Table~\ref{tab:rl_id} presents results on MM-NP-Bench. DMPO achieves \textbf{43.1\% Quality Ratio versus GRPO's 38.4\%}—a 12\% relative improvement—while also achieving the highest Success Rate (61.9\% vs. 55.7\%). This improvement demonstrates that DMPO's distribution matching prevents premature convergence, enabling discovery of higher-quality solutions that GRPO's mode-seeking behavior misses. DMPO's gains are consistent across task categories: +10.3\% QR on Constraint tasks (21.2\% → 31.5\%), +5.7\% on Partition tasks (67.4\% → 73.1\%), and +2.0\% on Path tasks (19.6\% → 21.6\%). Notably, DMPO improves both metrics simultaneously—on Partition tasks, it achieves near-perfect SR (98.5\%) while maintaining the highest QR (73.1\%), demonstrating effective exploration discovers more valid solutions and higher-quality ones without trading off feasibility for quality. Comparing with FlowRL, another diversity-focused method, DMPO achieves substantially higher QR (43.1\% vs. 38.9\%) despite similar SR (61.9\% vs. 58.6\%), suggesting that DMPO's group-level Boltzmann target better maintains local diversity than FlowRL's learned partition function.

Table~\ref{tab:mathnp} shows DMPO achieves 43.9\% QR on text-based NP-Bench versus GRPO's 40.1\% (+9\% relative improvement), demonstrating the approach is modality-agnostic. Notably, GPG achieves high SR (88.6\%) but low QR (32.7\%), indicating severe mode collapse where the model finds valid solutions but doesn't optimize them—DMPO's distribution matching prevents this pathology.

\textbf{Generalization to Mathematical Reasoning.}\quad
To demonstrate that DMPO's benefits extend beyond optimization, we evaluate on mathematical reasoning. We train Qwen2.5-Math-7B on Openr1-Math-46K and test on six benchmarks: AIME 2024, AIME 2025, AMC, Minerva Math, OlympiadBench, and MATH500. Table~\ref{tab:mathnp} (left) shows DMPO achieves 44.5\% average accuracy versus GRPO's 42.5\%—a 2.0\% improvement. Gains are consistent across difficulty levels: +2.4\% on AIME 2024 (hardest), +1.2\% on AIME 2025, and +1.5\% on OlympiadBench. These results suggest mathematical reasoning benefits from diversity preservation: many problems admit multiple valid proof strategies, and DMPO's mode-covering behavior enables exploration of different approaches rather than collapsing to the first successful method discovered.

\textbf{Transfer to Out-of-Domain Tasks.}\quad
To test whether diversity-preserving training on MM-NP-Bench enhances general reasoning capabilities, we evaluate on seven out-of-domain benchmarks spanning logic, visual reasoning, and mathematical problem-solving: LogicVista, GSM8K-V, MathVista, MathVision, MathVerse-V, VisuLogic, and WeMath. Table~\ref{tab:ood} shows training on MM-NP-Bench with DMPO improves out-of-domain performance by 2.3\% on average versus the baseline, and outperforms GRPO by 0.3\% (37.1\% vs. 36.8\%). This demonstrates that diversity-preserving training on optimization tasks transfers to general reasoning. Notably, when combined with general reasoning data (MM-16K), DMPO achieves 39.8\% (+5.0\% over baseline), suggesting optimization training and general reasoning are complementary.

\textbf{Ablation Studies.}\quad
We ablate $\lambda$ (distribution matching weight) and $\alpha$ (temperature) in Table~\ref{tab:abl_tmp} (Appendix~\ref{sec:experiments_appendix}). Optimal values are $\lambda \in [0.3, 0.7]$ and $\alpha \in [0.8, 1.2]$, with performance slightly degrading outside these ranges. This shows DMPO is reasonably robust to hyperparameter choices.
\begin{table*}[h]
    \caption{Performance of DMPO under different variant settings on MM-NP-Bench.}
    \centering
    \small
    \resizebox{\textwidth}{!}{
    \begin{tabular}{lcccccccccccc}
    \toprule
    \multirow{2}{*}{\textbf{Method}} &
      \multicolumn{2}{c}{\textbf{Graph}} & \multicolumn{2}{c}{\textbf{Schedule}} & \multicolumn{2}{c}{\textbf{Partition}} & \multicolumn{2}{c}{\textbf{Selection}} & \multicolumn{2}{c}{\textbf{Planning}} & \multicolumn{2}{c}{\textbf{Overall}} \\
    \cmidrule(lr){2-3} \cmidrule(lr){4-5} \cmidrule(lr){6-7} \cmidrule(lr){8-9} \cmidrule(lr){10-11} \cmidrule(lr){12-13}
     & 
      \textbf{SR} & \textbf{QR} & \textbf{SR} & \textbf{QR} & \textbf{SR} & \textbf{QR} & \textbf{SR} & \textbf{QR} & \textbf{SR} & \textbf{QR} & \textbf{SR} & \textbf{QR} \\
    \midrule
    Qwen2.5-7B-Instruct(backbone) & 11.0 & 3.1 & 40.0 & 19.8 & 67.0 & 34.0 & 26.7 & 15.2 & 3.5 & 1.0 & 29.6 & 14.6 \\
    GRPO~\cite{grpo} & 54.5 & 21.2 & 18.0 & 17.7 & 87.0 & 67.4 & \textbf{68.0} & \textbf{65.9} & 51.0 & 19.6 & 55.7 & 38.4 \\
    \midrule
    DMPO w/o GRPO & 51.5 & 22.1 & 29.5 & 20.0 & 88.0 & 67.3 & \textbf{68.0} & 64.9 & 48.5 & 18.3 & 57.1 & 38.5 \\
    DMPO w JS Divergence & 50.0 & 20.0 & 31.0 & 21.7 & 94.5 & \textbf{71.7} & 65.5 & 63.0 & 46.0 & 16.8 & 57.4 & 38.6 \\
    DMPO w MSE Divergence & \textbf{85.3} & \textbf{25.6} & \textbf{82.0} & \textbf{60.3} & \textbf{100.0} & {52.8} & {67.7} & 55.0 & \textbf{81.5} & \textbf{25.6} & \textbf{83.3} & \textbf{43.9} \\
    \bottomrule
    \end{tabular}
    }
    \label{tab:apd_fmr}
\end{table*}

\paragraph{Different Divergence Settings.} As shown in Table~\ref{tab:apd_fmr}, the DMPO loss can optimize the policy using only the distribution-matching term ($\lambda=\infty$), without the base GRPO loss, effectively removing the GRPO objective. This pure DMPO objective achieved a SR of 57.1\% and a QR of 38.5\%, slightly outperforming the standard GRPO (SR 55.6\%, QR 38.4\%). Additionally, we tested Jensen-Shannon (JS) divergence at the group level and found it effective in preventing mode collapse (SR 57.4\%, QR 38.6\%). However, both of these settings still underperform the full DMPO (MSE) objective. Therefore, DMPO should not be viewed merely as a local diversity regularizer. Rather, the strongest performance arises from combining the two objectives: GRPO provides a mode-seeking signal that drives reward maximization, while the DMPO term supplies a mode-covering signal that prevents collapse and preserves diversity across high-reward trajectories.

\textbf{Training Dynamics Analysis}\quad
Figure~\ref{fig:rewardplot} (Appendix~\ref{sec:apd:log}) demonstrates that DMPO effectively mitigates the degenerative failure modes observed in GRPO. While GRPO suffers from both mode and length collapse—plateauing at suboptimal rewards and degenerating into trivial, short responses ($<200$ tokens)—DMPO sustains continuous improvement and preserves robust reasoning chains ($\sim400$ tokens). This structural integrity is further evidenced by the entropy dynamics: DMPO's lower entropy, paired with superior performance, indicates confident execution of logical steps (\emph{structured diversity}) rather than the high-entropy stochastic confusion characteristic of GRPO.
\vspace{-1pt}

\section{Conclusion}

In this work, we identify mode collapse as a key limitation of on-policy reinforcement learning in Large Reasoning Models, caused by reverse KL minimization's mode-seeking behavior, leading to premature convergence. We propose DMPO (Distribution-Matching Policy Optimization), which prevents mode collapse by approximating forward KL minimization at the group level. DMPO aligns the policy to a group-level target distribution, ensuring mode coverage without requiring global sampling. We introduce MM-NP-Bench, a multimodal benchmark extending NP-Bench with visual representations of 10 NP-hard tasks, designed to highlight mode collapse through dual-metric evaluation (Success Rate and Quality Ratio). Extensive experiments show DMPO's effectiveness: 4.7\%-3.8\% improvements on optimization tasks, 2.0\% on mathematical reasoning, and 2.3\% on out-of-domain tasks. These gains demonstrate that diversity-preserving training enhances both optimization and general reasoning capabilities. Our work establishes distribution matching as a practical solution to mode collapse in on-policy RL, with empirical validation showing that maintaining exploration leads to better solutions and more robust reasoning. Future work could combine DMPO with replay buffers or extend it to off-policy settings.








\section*{Impact Statement}
\paragraph{Positive Impacts.} This work advances reinforcement learning methods for training Large Reasoning Models, with potential benefits including improved mathematical problem-solving, code generation, and optimization capabilities. By preventing mode collapse, DMPO enables more robust and diverse reasoning patterns, which could enhance AI systems' reliability in educational, scientific, and engineering applications. The MM-NP-Bench benchmark provides the research community with tools for evaluating and improving exploration in reasoning models.

\paragraph{Potential Risks.} As with any advancement in large language model capabilities, improved reasoning abilities could be misused for generating misleading arguments, solving optimization problems for malicious purposes, or automating tasks in ways that displace human workers. However, these risks are not unique to our work and apply broadly to progress in AI reasoning capabilities.

\paragraph{Broader Considerations.} We acknowledge that training large models requires substantial computational resources with associated environmental costs. Our method adds minimal overhead to existing training pipelines (single regularization term), limiting additional environmental impact. All experiments were conducted on shared research infrastructure to maximize resource efficiency.

\clearpage
\bibliographystyle{plainnat}
\bibliography{refs}

\clearpage
\appendix
\section{Method}

\subsection{Algorithm}

Algorithm~\ref{alg:dmpo} summarizes the complete DMPO training procedure.

\begin{algorithm}[h]
\caption{Distribution-Matching Policy Optimization (DMPO)}
\label{alg:dmpo}
\begin{algorithmic}[1]
\STATE \textbf{Input:} Policy $\pi_\theta$, reference policy $\pi_{\text{ref}}$, dataset $\mathcal{D}$, group size $G$, hyperparameters $\lambda, \beta, \epsilon$
\FOR{each training iteration}
    \STATE Sample query $x \sim \mathcal{D}$
    \STATE Sample group $\mathcal{O} = \{o_1, \ldots, o_G\}$ from $\pi_{\theta_{\text{old}}}(o|x)$
    \STATE Compute rewards $\{r(o_i)\}$ and advantages $\{\hat{A}_i\}$ via z-score normalization
    \STATE Compute GRPO loss $\mathcal{L}_{\text{GRPO}}$ via Eq.~\ref{eq:grpo_total}
    \STATE Compute target distribution $p(o_i \mid \mathcal{O})$ via Eq.~\ref{eq:target_dist}
    \STATE Compute policy distribution $q_\theta(o_i \mid \mathcal{O})$ via Eq.~\ref{eq:current_dist}
    \STATE Compute distribution-matching loss $\mathcal{L}_{\text{DM}}$ via Eq.~\ref{eq:dm_loss}
    \STATE Update policy: $\theta \leftarrow \theta - \alpha \nabla_\theta (\mathcal{L}_{\text{GRPO}} + \lambda \mathcal{L}_{\text{DM}})$
\ENDFOR
\STATE \textbf{Return:} Optimized policy $\pi_\theta$
\end{algorithmic}
\end{algorithm}

\section{Limitations}
First, DMPO currently relies on exact verifiable rewards, such as those in combinatorial optimization and mathematical reasoning. Extending it to open-ended tasks with subjective or learned rewards remains an open problem.
Second, in extremely sparse-reward settings, if no valid solution appears in the sampled group, the local group approximation cannot recover the global optimum by itself. Replay or other off-policy extensions may help in this regime, and We will explore this direction in future work.

\section{MM-NP-Bench}\label{sec:benchmark_appendix}

\subsection{Overview}

\paragraph{Multimodal Solution Spaces.} NP-hard problems feature exponentially large solution spaces with multiple high-quality solutions. For example, a TSP instance with 20 cities has approximately $10^{18}$ valid tours, many of which are feasible but vary dramatically in quality. Crucially, these solution spaces contain multiple local optima—regions where small modifications decrease quality but larger structural changes could lead to significantly better solutions. This creates natural pressure for sustained exploration: a model that prematurely converges to the first valid solution found will miss substantially better alternatives that require exploring through temporarily lower-reward regions.

\paragraph{Objective Quality Metrics.} Unlike open-ended reasoning where solution quality is subjective, optimization problems have precisely defined objectives such as minimizing tour length or maximizing clique size. This enables objective measurement of solution quality. We can compare a model's solution against reference baselines computed by heuristic solvers, quantifying exactly how close the model's solution is to optimal. This transforms mode collapse from a qualitative concern into a quantifiable metric that can be tracked during training and compared across methods.

\paragraph{Decoupling Feasibility from Optimality.} Crucially, optimization problems separate constraint satisfaction from quality optimization. We can measure two independent aspects: whether a solution is valid (satisfies all constraints) and whether it is high-quality (approaches optimality). A model exhibiting mode collapse demonstrates a characteristic pattern: it successfully learns to satisfy constraints but fails to optimize quality, settling for the first valid solution discovered. By measuring both aspects separately, we can directly observe and quantify this premature convergence.

NP-Bench~\cite{npengine} provides a comprehensive text-based benchmark for combinatorial optimization, featuring 10 NP-hard tasks described in natural language. This establishes a strong foundation for evaluating optimization reasoning in language models. However, NP-Bench's text-only formulations create three limitations that motivate our multimodal extension. First, many optimization problems are inherently spatial and structural, making them more naturally expressed visually than textually. A graph coloring problem is immediately interpretable from a visual rendering showing nodes and edges, whereas the text-only version requires parsing verbose adjacency lists. This verbosity increases cognitive load and conflates text parsing ability with optimization reasoning. Second, text-only benchmarks cannot evaluate vision-language models, which are increasingly deployed in real-world applications requiring multimodal reasoning. As these models advance, we need benchmarks assessing their ability to reason about visual optimization problems. Third, visual representations reduce specification ambiguity: edge weights displayed on a graph, node positions in a layout, and structural constraints in a diagram are unambiguous, whereas textual descriptions can be misinterpreted or require careful parsing.

\subsection{Bnechmark Details}
\label{sec:benchmark_Details}
MM-NP-Bench provides a robust infrastructure for evaluating MLLMs on visual combinatorial optimization tasks. It addresses the gap in optimization capability in existing reasoning benchmarks by focusing on NP-hard problems with complex solution landscapes. The task taxonomy is detailed below.

\noindent \textbf{Path Planning}  
These tasks assess the model's ability to perform sequential planning and understand global connectivity. We include:
(1) \texttt{Traveling Salesman Problem (TSP)}: Finding the shortest Hamiltonian tour in a weighted graph.
(2) \texttt{Hamiltonian Cycle}: Determining the existence of a path visiting every node exactly once.  
Success in this category requires long-horizon planning and spatial reasoning to avoid premature loops or dead ends.

\noindent \textbf{Covering Problems.}  
These tasks evaluate the identification of critical nodes that influence the entire graph structure. We include:
(3) \texttt{Vertex Cover}: Selecting the minimum set of vertices such that every edge is incident to at least one selected vertex.
(4) \texttt{Dominating Set}: Finding the minimum set of nodes such that every node in the graph is either in the set or adjacent to it.  
These problems test the ability to optimize global coverage through local node selection.

\noindent \textbf{Graph Partition.}  
These tasks require partitioning graphs based on relational density, testing the model's capacity to identify clusters and boundaries. We include:
(5) \texttt{Minimum Cut}: Partitioning vertices to minimize the weight of crossing edges (identifying bottlenecks).
(6) \texttt{Maximum Cut}: Partitioning vertices to maximize crossing edges (identifying bipartite structures).  
This category challenges the model to balance competing objectives between intra-cluster cohesion and inter-cluster separation.

\noindent \textbf{Substructure Discovery.}  
These tasks demand the extraction of specific patterns—either extremely dense or sparse—from complex visual noise. We include:
(7) \texttt{Maximum Clique}: Finding the largest subset of vertices where every pair is connected (dense substructure).
(8) \texttt{Maximum Independent Set}: Finding the largest subset of vertices with no connections between them (sparse substructure).  
These tasks act as duals, testing the model's ability to isolate signal from structural noise.

\noindent \textbf{Constraint Satisfaction.}  
Unlike pure optimization, these tasks involve strict validity rules where local violations render the solution invalid. We include:
(9) \texttt{Graph Coloring}: Assigning labels to nodes such that no adjacent nodes share the same label, minimizing the total number of labels.
(10) \texttt{Feedback Vertex Set}: Identifying the minimum set of vertices whose removal makes the graph acyclic (breaking all cycles).  
This category evaluates the model's ability to perform logical consistency checks and conflict resolution.

\subsection{Benchmark Creation Pipeline}

\begin{figure*}[h]
    \centering
    \includegraphics[width=0.95\linewidth]
    {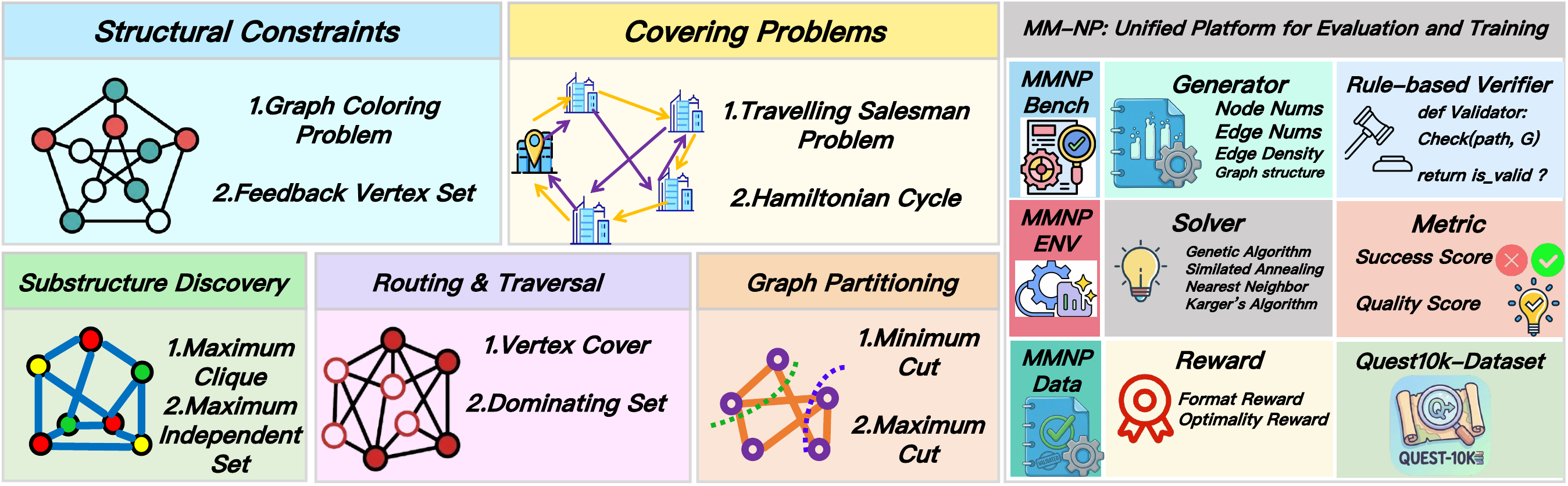}
    \caption{\textbf{Overview of MM-NP-Bench.} A multimodal benchmark for evaluating exploration in optimization reasoning. MM-NP-Bench features 10 NP-hard tasks with dual-metric evaluation (Success Rate and Quality Ratio) that makes mode collapse observable. Complete infrastructure includes parametric generators, rule-based verifiers, and heuristic solvers, enabling both evaluation and RLVR training.}
    \label{fig:quest}
    \vspace{-1em}
\end{figure*}

\paragraph{Parametric Generator.} For each task, we implement a controllable generation engine that produces instances based on difficulty parameters such as graph size, density, and constraint complexity. Each generated instance includes three synchronized representations. The textual specification provides a natural language description of the problem, objective, and constraints (e.g., "Find the shortest tour visiting all 20 cities exactly once"). The visual rendering depicts the problem structure as an image—for graph problems, this shows nodes, edges, and weights in a clear layout; for packing problems, this shows items and bins with size annotations. The symbolic representation provides structured data (adjacency matrices for graphs, coordinate lists for geometric problems, clause lists for SAT) enabling programmatic verification and deterministic evaluation. This multi-representation approach ensures models can be evaluated on their preferred input modality while maintaining consistent problem semantics across modalities.

\paragraph{Rule-Based Verifier.} Each task has a deterministic verifier that checks constraint satisfaction with polynomial complexity ($O(E)$ or $O(V^2)$ depending on the task). Critically, verifiers check structural validity rather than accepting scalar outputs—for TSP, the model must output the complete tour sequence, not just the tour length. This prevents reward hacking where models might output plausible-looking numbers without actually solving the problem. Verifiers confirm all hard constraints are satisfied: for TSP, every city is visited exactly once and the tour forms a valid cycle; for Graph Coloring, no adjacent vertices share colors; for Bin Packing, no bin exceeds capacity. Only solutions passing verification are eligible for quality evaluation.

\paragraph{Heuristic Solver.} For each generated instance, we run task-specific heuristic algorithms to compute a reference solution $V^*$ serving as the quality baseline. Table~\ref{tab:heuristic_solvers} lists the algorithms used for each task. These solvers are designed to be computationally efficient (polynomial or practical exponential time) while producing high-quality solutions. For example, we use 2-opt local search for TSP, which iteratively improves tours by swapping edges; Clarke-Wright savings algorithm for VRP, which constructs routes by merging profitable pairs; and greedy set selection for Set Cover, which iteratively selects sets covering the most uncovered elements. While these heuristics do not guarantee global optimality (NP-hard problems have no polynomial-time exact algorithms), they provide strong baselines that are difficult for models to match without sophisticated optimization reasoning. The reference value $V^*$ enables computation of Quality Ratio, quantifying how close a model's solution is to this near-optimal baseline.

\begin{table}[h]
\centering
\caption{Heuristic solvers for the selected combinatorial optimization tasks. The solvers generally run in polynomial time or efficient exponential time for practical instances, providing baselines for the Quality Ratio.}
\label{tab:heuristic_solvers}
\begin{tabular}{lll}
\toprule
\textbf{Task} & \textbf{Heuristic Algorithm} & \textbf{Complexity} \\
\midrule
\textit{Constraint} & & \\
Graph Coloring (GCP) & DSATUR heuristic & $O(V^2)$ \\
Feedback Vertex Set & Greedy high-degree removal & $O(V \cdot E)$ \\
\midrule
\textit{Covering} & & \\
Vertex Cover & 2-approximation algorithm & $O(E)$ \\
Dominating Set & Greedy set cover reduction & $O(V^3)$ \\
\midrule
\textit{Partitioning} & & \\
Minimum Cut & Stoer-Wagner algorithm & $O(V \cdot E + V^2 \log V)$ \\
Maximum Cut & Greedy local search & $O(V \cdot E)$ \\
\midrule
\textit{Subgraph} & & \\
Maximum Clique & Bron-Kerbosch with pruning & Exponential (practical) \\
Max Independent Set & Min-degree greedy & $O(V \log V + E)$ \\
\midrule
\textit{Path} & & \\
TSP & 2-opt local search & $O(n^2)$ per iteration \\
Hamiltonian Cycle & DFS with pruning (or Pósa's) & Exponential (practical) \\
\bottomrule
\end{tabular}
\end{table}

\begin{figure}[ht]
    \centering
    \includegraphics[width=0.7\linewidth]{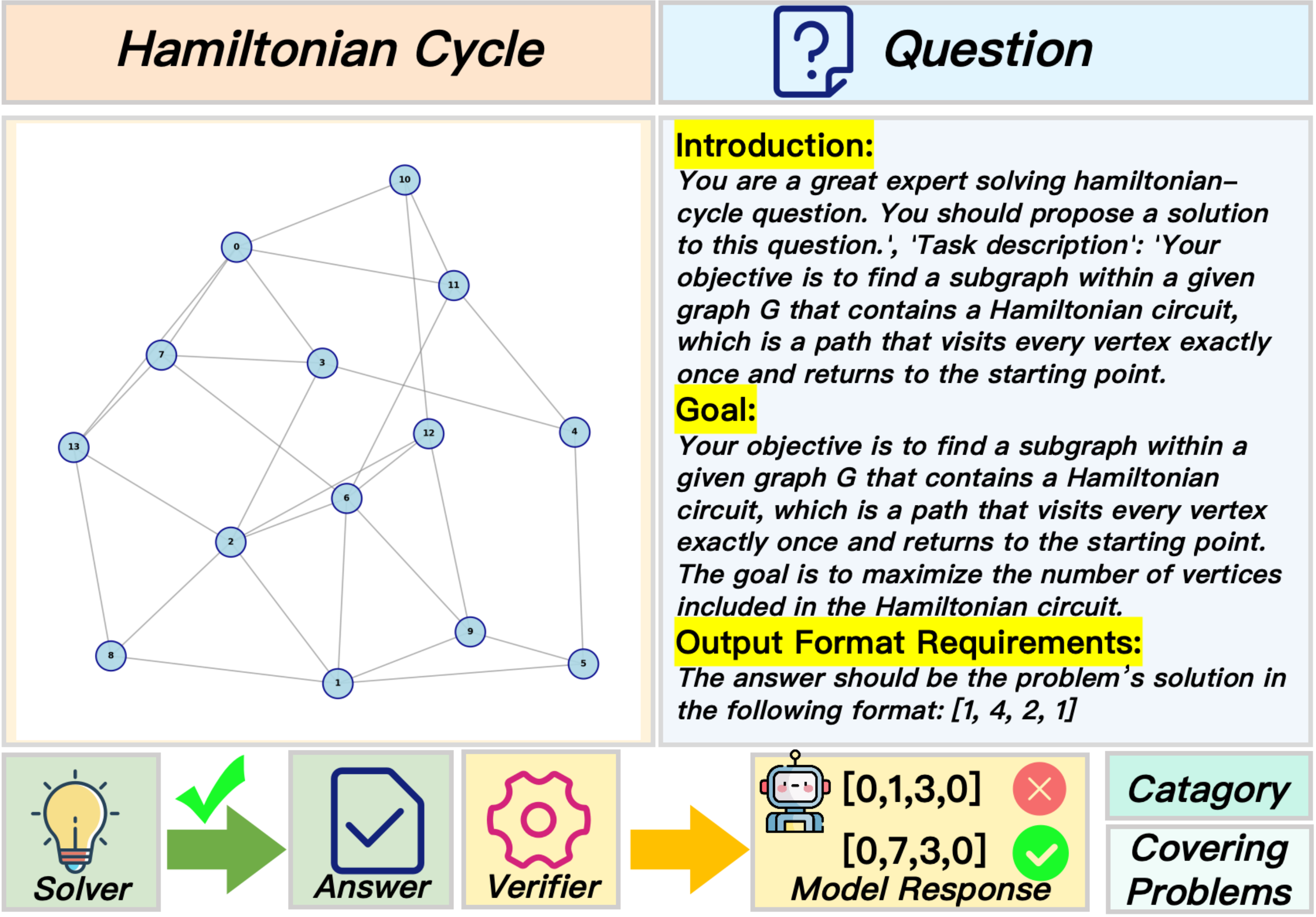}
    \caption{Example of a Hamiltonian Cycle in MM-NP-Bench. The goal is to find the longest cycle path in the graph under problem constraints. A rule-based verifier is used for precise evaluation, and a solver provides the optimal ground truth.}
    \label{fig:data}
\end{figure}

\section{Experiments}\label{sec:experiments_appendix}

\subsection{Experimental Setup}\label{app:training_setup}

\paragraph{Model Architecture.} We use three base models depending on the task modality. For multimodal tasks (MM-NP-Bench and MM-16K~\cite{mm16k}), we use Qwen2.5-VL-7B-Instruct. For mathematical reasoning, we use Qwen2.5-Math-7B trained on Openr1-Math-46K~\cite{luffy}. For text-based optimization (NP-Bench), we use Qwen2.5-7B-Instruct-1M~\cite{npengine}.

\paragraph{Training Data.} We train models on three datasets: (1) \textbf{MM-NP-Bench}: 10,000 multimodal optimization instances with visual representations, (2) \textbf{NP-Bench}: 10,000 text-only optimization instances, and (3) \textbf{Openr1-Math-46K}: mathematical reasoning data. For out-of-domain transfer experiments, we also use MM-16K~\cite{mm16k}, a general multimodal reasoning dataset.

\paragraph{Evaluation Benchmarks.} We evaluate on three categories of benchmarks. \textbf{Optimization:} MM-NP-Bench (visual, 10 tasks) and NP-Bench~\cite{npengine} (text, 10 tasks). \textbf{Mathematical reasoning:} Six benchmarks including AIME 2024, AIME 2025, AMC~\cite{amc}, Minerva~\cite{Minerva}, OlympiadBench~\cite{olympiadbench}, and MATH500~\cite{math500}. We report avg@32 for AIME and AMC (smaller test sets), pass@1 for others. \textbf{Out-of-domain:} Seven visual reasoning benchmarks—LogicVista~\cite{LogicVista}, GSM8K-V~\cite{gsm8k-v}, MathVista~\cite{mathvista}, MathVision~\cite{MathVision}, MathVerse-V~\cite{MathVerse}, VisuLogic~\cite{VisuLogic}, and WeMath~\cite{wemath}—evaluated using VLMEvalKit~\cite{vlmkit}.

\paragraph{Baseline Methods.} We compare DMPO against five strong on-policy RL baselines: GRPO~\cite{grpo}, GSPO~\cite{gspo}, GPG~\cite{gpg}, ClipCov~\cite{clipcov}, FlowRL~\cite{flowrl}, GRPOPass@K (Meta)~\cite{metapassk}, and GRPOPass@K (Seed). All methods use identical training data, model architecture, and hyperparameters (learning rate, batch size, training steps) following official VeRL configurations, with only method-specific parameters varying.

\paragraph{Training Hyperparameters.} Common settings across all methods: group size $G = 8$, learning rate $1 \times 10^{-6}$ with cosine decay, 250 training iterations, clipping parameter $\epsilon = 0.2$, KL penalty $\beta_{\text{KL}} = 0.0$. DMPO-specific: distribution matching weight $\lambda = 2.0$, temperature $\alpha = \frac{1}{15}$. See Table~\ref{tab:abl_tmp} for complete settings and sensitivity analysis.

\begin{figure*}[h]
    \centering
    \includegraphics[width=0.8\linewidth]
    {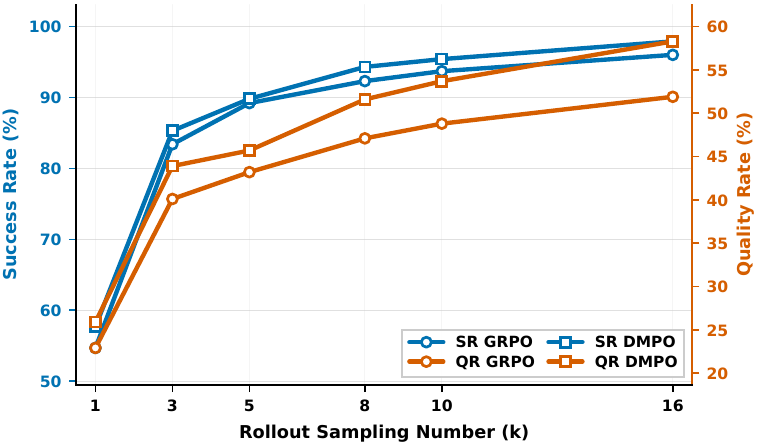}
    \caption{Performance of GRPO and DMPO on the MM-NP-Bench, showing Success Rate (SR) and Quality Ratio (QR) across different rollout sampling numbers ($k$).}
    \label{fig:rollout}
    \vspace{-1em}
\end{figure*}
\paragraph{Diversity Evaluation.} Our benchmark is designed to capture the distinction between feasibility and optimization quality. In this context, mode collapse manifests as a high Success Rate (SR) but a relatively low Quality Ratio (QR): the model consistently finds feasible solutions but repeatedly concentrates on a narrow set of suboptimal outcomes. To test whether DMPO simply collapses onto a single stronger optimum, we examine how SR and QR scale with the number of rollout samples ($k$). As shown in Figure~\ref{fig:rollout}, if a policy has collapsed to one dominant mode, increasing $k$ may still improve SR, but QR should plateau quickly because repeated samples yield essentially the same solution. This is precisely what we observe for GRPO: as $k$ increases from 1 to 16, SR rises sharply from 54.7\% to 96.0\%, while QR improves more gradually, saturating at 51.9\%.

DMPO exhibits a markedly different pattern. As $k$ increases from 1 to 16, both SR and QR continue to improve steadily (SR: 57.7\% to 97.9\%, QR: 25.9\% to 58.3\%). This provides strong evidence that DMPO does not simply concentrate on a single dominant mode; rather, it maintains a broader set of high-quality solution modes, which can be effectively uncovered through additional sampling.

\begin{table}[t]
\centering
\caption{Computational comparison of different methods. All experiments use the same hardware, compute budget, and group size.}
\small
\begin{tabular}{lcc}
\toprule
\textbf{Method} & \textbf{MFU} & \textbf{Time per Step (s)} \\
\midrule
GRPO~\cite{grpo}      & 0.0158 & 109.2 \\
GSPO~\cite{gspo}      & 0.0177 & 121.7 \\
GPG~\cite{gpg}       & 0.0129 & 126.7 \\
ClipCov~\cite{clipcov}   & 0.0167 & 118.1 \\
FlowRL~\cite{flowrl}    & 0.0670 &  67.3 \\
DMPO (Ours)     & 0.0176 & 125.2 \\
\bottomrule
\end{tabular}
\label{tab:compute_comparison}
\end{table}
\paragraph{Wall-Clock.} Although DMPO introduces the additional computation—the group-level distribution-matching MSE—compared to the base algorithm GRPO, it maintains a comparable computational efficiency. As shown in Table~\ref{tab:compute_comparison}, the time per step and MFU of DMPO remain similar to those of GRPO and other baseline methods, indicating that the added MSE term does not introduce significant overhead while still improving the policy performance.

\subsection{Additional Experimental Results}\label{apd:results}

\paragraph{NP-Bench Analysis.} 
Results on NP-Bench (Table~\ref{tab:apd_np}) further corroborate the effectiveness of DMPO in navigating complex combinatorial landscapes. A critical observation is the "optimality gap" prevalent in standard RLVR baselines; for instance, while GPG achieves a high Overall Success Rate (88.6\%), its Quality Ratio remains significantly lower (32.7\%), indicating a tendency to settle for sub-optimal feasible solutions. This behavior is most pronounced in the \textit{Schedule} task, where DMPO outperforms GPG by a substantial margin in QR (60.3\% vs. 28.5\%) despite a lower SR. This suggests that while greedy methods excel at satisfying hard constraints, they often fail to explore the high-reward regions of the solution space. In contrast, DMPO achieves the highest Overall Quality Ratio (43.9\%), demonstrating its unique ability to transcend mere constraint satisfaction and engage in genuine global optimization. By aligning the policy with the reward-induced distribution, DMPO effectively mitigates the "winner-takes-all" pathology, ensuring that the model maintains the exploration pressure necessary to discover structurally superior solutions across diverse NP-hard domains.

\begin{table*}[ht]
    \caption{Performance of reasoning LLMs, general LLMs, and our trained LLMs on NP-Bench.}
    \centering
    \small
    \resizebox{\textwidth}{!}{
    \begin{tabular}{lcccccccccccc}
    \toprule
    \multirow{2}{*}{\textbf{Method}} &
      \multicolumn{2}{c}{\textbf{Graph}} & \multicolumn{2}{c}{\textbf{Schedule}} & \multicolumn{2}{c}{\textbf{Partition}} & \multicolumn{2}{c}{\textbf{Selection}} & \multicolumn{2}{c}{\textbf{Planning}} & \multicolumn{2}{c}{\textbf{Overall}} \\
    \cmidrule(lr){2-3} \cmidrule(lr){4-5} \cmidrule(lr){6-7} \cmidrule(lr){8-9} \cmidrule(lr){10-11} \cmidrule(lr){12-13}
     & 
      \textbf{SR} & \textbf{QR} & \textbf{SR} & \textbf{QR} & \textbf{SR} & \textbf{QR} & \textbf{SR} & \textbf{QR} & \textbf{SR} & \textbf{QR} & \textbf{SR} & \textbf{QR} \\
    \midrule
    Qwen2.5-7B-Instruct(backbone) & 11.0 & 3.1 & 40.0 & 19.8 & 67.0 & 34.0 & 26.7 & 15.2 & 3.5 & 1.0 & 29.6 & 14.6 \\
    \midrule
    GRPO~\cite{grpo} & \textbf{93.7} & \textbf{30.5} & 69.0 & 49.6 & \textbf{100.0} & 51.8 & 63.3 & 44.3 & 91.0 & 24.3 & 83.4 & 40.1 \\
    GSPO~\cite{gspo} & 85.7 & 24.1 & 80.0 & 56.5 & 99.0 & 51.2 & 65.7 & 46.7 & 82.5 & \textbf{27.4} & 82.6 & 41.2 \\
    GPG~\cite{gpg} & 92.7 & 23.5 & \textbf{96.0} & 28.5 & \textbf{100.0} & 51.8 & 59.3 & 39.6 & \textbf{95.0} & 20.3 & \textbf{88.6} & 32.7 \\
    ClipCov~\cite{clipcov} & 91.0 & 25.5 & 93.0 & 52.2 & 96.0 & 49.8 & \textbf{73.3} & 58.1 & 82.5 & 21.2 & 87.2 & 41.3 \\
    \midrule
    DMPO (Ours) & 85.3 & 25.6 & 82.0 & \textbf{60.3} & \textbf{100.0} & \textbf{52.8} & 67.7 & 55.0 & 81.5 & 25.6 & 83.3 & \textbf{43.9} \\
    \bottomrule
    \end{tabular}
    }
    \label{tab:apd_np}
\end{table*}

\paragraph{Ablation Analysis.} 
To analyze the hyperparameter sensitivity of DMPO, we conduct an ablation study on the matching coefficient $\lambda$ and inverse temperature $\alpha$ (Table~\ref{tab:abl_tmp}). The results reveal a fundamental trade-off between reward exploitation and distributional stability. The coefficient $\lambda$ controls the regularization strength; increasing $\lambda$ from 1.0 to 2.0 consistently improves both Success Rate (SR) and Quality Ratio (QR), indicating that a robust distributional anchor is essential to suppress the "winner-takes-all" pathology and prevent the model from settling for sub-optimal local modes. Meanwhile, the temperature $\alpha$ modulates the "sharpness" of the target landscape: a high $\alpha$ results in a nearly uniform target distribution that dilutes the optimization signal, while an excessively low $\alpha$ forces the target distribution more sharper, reintroducing mode collapse and negating the benefits of diverse exploration. Our optimal configuration of $\lambda = 2.0$ and $\alpha = \frac{1}{15}$ strikes a delicate balance: it provides sufficient gradient pressure to drive the policy toward high-quality solutions while maintaining a broad enough probability spread to sustain global search. This parameterization ensures that the policy frontier effectively mirrors the multi-modal reward landscape, leading to superior asymptotic performance across the complex tasks in \textsc{MM-NP-Bench}.

\begin{table*}[t]
    \caption{Performance of DMPO with varying temperature $\beta$ and matching coefficient $\lambda$ on MM-NP-Bench.}
    \centering
    \small
    \resizebox{\textwidth}{!}{
    \begin{tabular}{lcccccccccccc}
    \toprule
    \multirow{2}{*}{\textbf{Parameter Settings}} &
      \multicolumn{2}{c}{\textbf{Constraint}} & \multicolumn{2}{c}{\textbf{Covering}} & \multicolumn{2}{c}{\textbf{Partition}} & \multicolumn{2}{c}{\textbf{Subgraph}} & \multicolumn{2}{c}{\textbf{Path}} & \multicolumn{2}{c}{\textbf{Overall}} \\
    \cmidrule(lr){2-3} \cmidrule(lr){4-5} \cmidrule(lr){6-7} \cmidrule(lr){8-9} \cmidrule(lr){10-11} \cmidrule(lr){12-13}
     & 
      \textbf{SR} & \textbf{QR} & \textbf{SR} & \textbf{QR} & \textbf{SR} & \textbf{QR} & \textbf{SR} & \textbf{QR} & \textbf{SR} & \textbf{QR} & \textbf{SR} & \textbf{QR} \\
    \midrule
    $\lambda=$1.0, $\alpha=\frac{1}{15}$ & 58.5 & 28.2 & 23.5 & 19.0 & 88.5 & 68.9 & 70.0 & 65.2 & \textbf{51.5} & 18.8 & 58.4 & 40.0 \\
    $\lambda=$2.0, $\alpha=\frac{1}{15}$ & 62.5 & 31.5 & 25.5 & 22.1 & \textbf{98.5} & \textbf{73.1} & \textbf{72.0} & 67.4 & 51.0 & \textbf{21.6} & 61.9 & 43.1 \\
    \midrule
    $\lambda=$2.0, $\alpha=\frac{1}{5}$ & 53.5 & 27.6 & 24.5 & 21.8 & 89.0 & 68.3 & 80.5 & \textbf{77.0} & 43.0 & 15.3 & 58.1 & 42.0 \\
    $\lambda=$2.0, $\alpha=\frac{1}{10}$ & \textbf{69.0} & \textbf{35.0} & \textbf{31.5} & \textbf{26.8} & 95.5 & 71.2 & 67.0 & 64.6 & 50.0 & 18.5 & \textbf{62.6} & \textbf{43.2} \\
    $\lambda=$2.0, $\alpha=\frac{1}{20}$ & 55.0 & 29.3 & 22.0 & 18.8 & 94.5 & 68.5 & 59.5 & 58.4 & 47.0 & 16.0 & 55.6 & 38.2 \\
    \bottomrule
    \end{tabular}
    }
    \label{tab:abl_tmp}
\end{table*}

\paragraph{Training Dynamics Analysis}\label{sec:apd:log}

Figure~\ref{fig:rewardplot} reveals the mechanism behind DMPO's improvements through training dynamics on MM-NP-Bench.

\begin{figure*}[h]
    \centering
    \includegraphics[width=\linewidth]{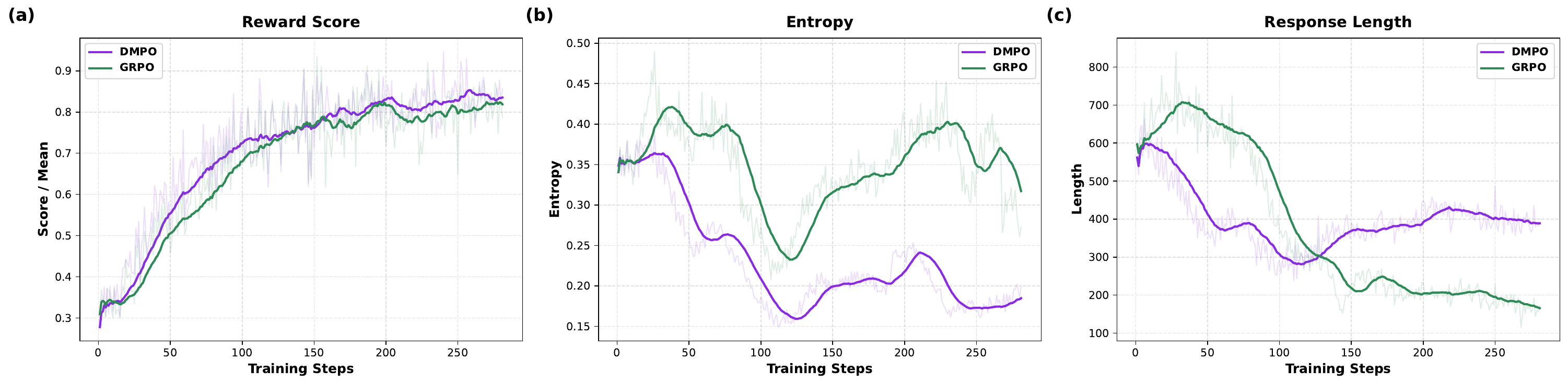}
    \caption{Comparison between GRPO and DMPO on training log dynamics. \textbf{Left (a)} shows the reward score, \textbf{Middle (b)} shows the entropy, and \textbf{Right (c)} shows the response length.}
    \label{fig:rewardplot}
\end{figure*}

\textbf{Reward progression (Figure~\ref{fig:rewardplot}a):} DMPO achieves higher final rewards (0.85 vs. 0.75 for GRPO) and continues improving throughout training, while GRPO plateaus after approximately 100 steps. This plateau is the signature of mode collapse—the policy has concentrated on a suboptimal solution and stopped exploring despite the reward function incentivizing better solutions.

\textbf{Length preservation (Figure~\ref{fig:rewardplot}c):} GRPO suffers from length collapse, where response length degenerates from $\sim$600 tokens to $<$200 tokens. This indicates the policy is taking shortcuts, generating minimal valid solutions rather than thorough reasoning chains. DMPO maintains $\sim$400 token responses throughout training, demonstrating that distribution matching acts as a regularizer preventing shortcuts.

\textbf{Entropy dynamics (Figure~\ref{fig:rewardplot}b):} Surprisingly, DMPO maintains lower entropy than GRPO despite being designed for exploration. In Chain-of-Thought reasoning, high entropy often indicates stochastic confusion (random token selection), while low entropy indicates confident execution of logical steps. DMPO's lower entropy combined with higher rewards and maintained length suggests it explores effectively through \emph{structured diversity}—generating different reasoning strategies, not random tokens.

\subsection{Case Study of MM-NP-Bench}

\begin{figure*}[h]
    \centering
    \includegraphics[width=\linewidth]
    {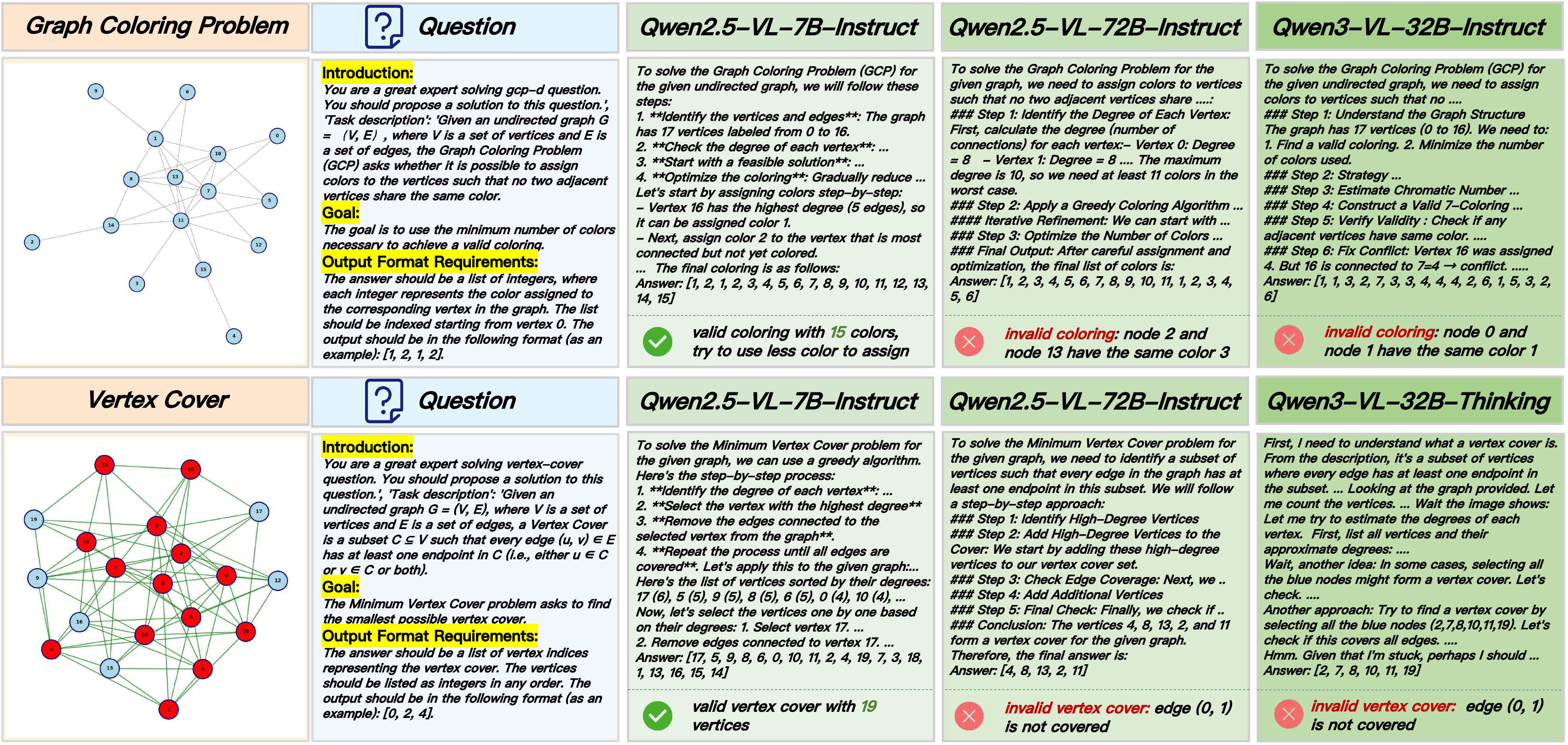}
    \caption{\textbf{Case Study of MM-NP-Bench.} Specifically, the Graph Coloring Problem task comes from the Constraint category, while Vertex Cover task comes from the Covering category.}
    \label{fig:dual_case}
    \vspace{-1em}
\end{figure*}

Figure~\ref{fig:dual_case} presents a case study from MM-NP-Bench, where certain smaller models (e.g., Qwen2.5-VL-7B-Instruct) outperform larger or newer-generation models in specific tasks. We analyze this phenomenon through two sub-tasks: the Graph Coloring Problem and Vertex Cover.

In the Graph Coloring Problem, we observe that Qwen2.5-VL-7B-Instruct adopts a conservative exploration strategy. It initially attempts a solution using the maximum number of colors to ensure feasibility. When further optimization attempts fail, it intelligently reverts to the most reliable baseline solution. In contrast, while the other two models follow a step-by-step reasoning process, they tend to over-optimize a seemingly positive result while overlooking inherent flaws in the initial steps. This premature optimization often triggers constraint violations (e.g., identical colors on adjacent vertices), leading to a lower success rate.

Furthermore, our analysis of the Vertex Cover task reveals critical insights into the Thinking models. While Qwen3-VL-32B-Thinking generates an extensive reasoning trace exceeding 4,000 tokens and significantly surpasses Qwen2.5-VL-7B-Instruct (500 tokens), this extended thought process appears to correlate with deeper hallucinations. A detailed inspection shows that the excessive reasoning length causes the model’s visual perception to drift, leading to cumulative errors in reasoning. Ultimately, these layered mistakes result in an incorrect final output despite the extensive computation.

\end{document}